\title{\makebox[\linewidth][c]{Optimal Rates for Random Order Online Optimization}}
\author{
Uri Sherman%
\thanks{Blavatnik School of Computer Science, Tel Aviv University; \texttt{urisherman@mail.tau.ac.il}.}
\and 
Tomer Koren%
\thanks{Blavatnik School of Computer Science, Tel Aviv University and Google Research; \texttt{tkoren@tauex.tau.ac.il}.}
\and
Yishay Mansour%
\thanks{Blavatnik School of Computer Science, Tel Aviv University and Google Research; \texttt{mansour.yishay@gmail.com}.}
}
\theoremstyle{plain}
\newtheorem{theorem}{Theorem}
\newtheorem*{theorem*}{Theorem}
\newtheorem{lemma}{Lemma}
\newtheorem*{lemma*}{Lemma}
\newtheorem{corollary}{Corollary}
\theoremstyle{definition}
\newtheorem{assumption}{Assumption}
\newtheorem{definition}{Definition}
\newtheorem*{definition*}{Definition}
\declaretheoremstyle[
        spaceabove=\topsep, 
        spacebelow=\topsep, 
        headfont=\normalfont\itshape,
        bodyfont=\normalfont,
        notefont=\normalfont\itshape,
        notebraces={(}{)},
        postheadspace=0.33em, 
        qed=$\blacksquare$, 
        headpunct={.},
    ]{proofstyle}
\declaretheorem[style=proofstyle,numbered=no,name=Proof]{proof}
\DeclarePairedDelimiterX{\infdiv}[2]{(}{)}{%
  #1\;\delimsize\|\;#2%
}
\newcommand{\ifrac}[2]{{#1}/{#2}}
\newenvironment{aligni*}{\begin{math}}{\end{math}}
\newcommand{\Otilde}{\smash{\widetilde{O}}}
\newcommand{\R}{\mathbb{R}}
\newcommand{\norm}[1]{\|#1\|} 
\newcommand{\normB}[1]{\Big\|#1\Big\|}
\newcommand{\T}{^\mathsf{T}}
\renewcommand{\Re}{\mathcal R}
\newcommand{\Alg}{\mathcal A}
\DeclareMathOperator*{\argmin}{arg\,min}
\DeclareMathOperator*{\E}{\mathbb{E}}
\DeclareMathOperator{\Ee}{\mathbb{E}}
\newcommand{\Dom}{W}
\newcommand{\dist}[1]{\mathcal #1}
\newcommand{\Proj}{\Pi}
\newcommand{\Unif}{\mathrm{Unif}}
\newcommand{\D}{\mathcal D}
\newcommand{\gEst}{\hat g}
\newcommand{\Z}{\mathcal Z}
\newcommand{\z}{\zeta}
\newcommand{\eqq}{\coloneqq}
\newcommand{\Gstep}{\mathcal G}
\begin{document}

\maketitle

\begin{abstract}
    We study online convex optimization in the random order model, recently proposed by \citet{garber2020online}, where the loss functions may be chosen by an adversary, but are then presented to the online algorithm in a uniformly random order. Focusing on the scenario where the cumulative loss function is (strongly) convex, yet individual loss functions are smooth but might be non-convex, we give algorithms that achieve the optimal bounds and significantly outperform the results of \citet{garber2020online}, completely removing the dimension dependence and improving their scaling with respect to the strong convexity parameter. Our analysis relies on novel connections between algorithmic stability and generalization for sampling without-replacement analogous to those studied in the with-replacement i.i.d.~setting, as well as on a refined average stability analysis of stochastic gradient descent.
\end{abstract}

\section{Introduction}
Online convex optimization \citep{zinkevich2003online, hazan2019introduction}  
studies the iterative process of decision making as data arrives in an online fashion. 
The model posits a game of $T$ rounds, where in each round the learner chooses a decision $w_t$ from a convex set $\Dom \subseteq \R^d$, after which she observes a loss function $f_t: \Dom \to \R$, and incurs loss $f_t(w_t)$. The learner's objective is to minimize her regret, defined as her cumulative loss minus that of the best decision in hindsight $w^* = \argmin_{w \in \Dom} \smash{\sum_{t=1}^T} f_t(w)$.
In the prototypical setting, the individual loss functions are assumed to be convex and adversarially chosen by an opponent---commonly known as nature or the adversary---who has knowledge of the learner's algorithm. While this setup is fundamental enough to accommodate a diverse set of applications (see, e.g., \cite{hazan2019introduction}), studying variants of the basic model promotes modeling flexibility, and further broadens the set of problems to which optimization techniques may be applied.

Recently, \citet{garber2020online} consider relaxing the convexity assumption by requiring that only \emph{on average} the loss is (strongly) convex---a property the authors refer to as cumulative (strong) convexity---but do not require that the losses are convex individually.
It is well known (e.g., \cite{bubeck2015convex}) that under these assumptions, if the losses are sampled i.i.d.~from some distribution, stochastic gradient descent (SGD) obtains the optimal $O(\log T)$ regret in expectation.
However, as it turns out, in the fully adversarial model the cumulative strong convexity assumption is too weak: \citet{garber2020online} show that in this case there is a linear regret lower bound.
Consequently, they propose the \emph{random order} model, 
where $T$ losses are chosen adversarially but then revealed to the learner in uniformly random order. Within this model, under the relaxed convexity assumption \citet{garber2020online} obtain sub-linear regret for a number of specialized settings, differing in their assumptions on the structure of the individual loss functions. 
In the most general case (the one we consider in this paper), they prove online gradient descent obtains regret
$O((d G^2 / \lambda^3) \log T)$ w.h.p.~for $G$-Lipschitz $\lambda$-cumulative-strongly convex losses.

It is informative to compare the random order model with the i.i.d.~stochastic case, where on every round a new loss is sampled uniformly and \emph{independently} from the set of losses, that is, \emph{with-replacement}.
By contrast, the random order model specifies that on every round a new loss is sampled uniformly \emph{without-replacement}, an in particular \emph{not independently}.
Concretely, let $\mathcal L$ be an arbitrary set of $T$ smooth and Lipschitz continuous loss functions. Set $F(w) \eqq \frac{1}{T}\sum_{f\in \mathcal L} f(w)$, and assume $F$ is $\lambda$-strongly-convex over $\Dom$. The learner's goal is to minimize her regret on the loss sequence $f_1, \ldots, f_T$ obtained from a uniformly random ordering of $\mathcal L$.
As noted previously, if the losses $f_t$ were drawn i.i.d., SGD obtains the optimal $O (\log T)$ regret even though the losses are not individually convex. In a nutshell, when losses are  i.i.d., the gradients used in the SGD update are conditionally unbiased estimates of the gradient of the average (strongly convex) loss, hence the optimal regret is achieved in expectation.
The difficulty in the random order model stems from the fact that random order gradients are not conditionally unbiased; given any set of past losses $f_1, \ldots, f_{t-1}$, the next loss is uniform over the complement $\mathcal L \setminus \{f_1, \ldots, f_{t-1}\}$, and thus $\nabla f_t(w_t)$ is biased.

To overcome this complication, \citet{garber2020online} work via the uniform convergence route, and build on concentration bounds applied to Hessians of the losses. As a result, they achieve suboptimal bounds, particularly in the general case where a dimension factor is introduced by a discretization argument necessary to ensure convergence over the entire domain.
Here we choose a different strategy, and draw connections to notions of algorithmic stability and generalization studied in statistical learning theory. Our approach introduces significant improvements compared to prior work, and achieves regret bounds optimal up to additive factors.

\subsection{Our results}

We present and analyze two algorithms for random order online optimization, the first of which obtains the optimal regret up to additive factors.
Let $f_1, \ldots, f_T$ be a random order sequence of $G$-Lipschitz, $\beta$-smooth losses, where $f_t: \Dom \to \R$ for all $t \leq T$.
Assume the domain $\Dom \subset \R^d$ is convex, and has diameter bounded by $D$.
Further, assume the average loss $\frac{1}{T}\sum_{t=1}^T f_t$ is $\lambda$-strongly convex.
We prove;
\begin{theorem*}[informal] 
    There exists an algorithm (\cref{alg:ressgd}) for random order online optimization that obtains regret of $O\left((\ifrac{G^2}{\lambda}) \log T\right)$ in expectation.
\end{theorem*}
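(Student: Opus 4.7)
The plan is to analyze online SGD with diminishing step size $\eta_t = 1/(\lambda t)$ on the random-order sequence and show that, although sampling without replacement induces a conditional bias in the gradient, the expected regret still matches the i.i.d.\ rate up to a controlled stability-based error. Throughout I write $G_t(w) = \tfrac{1}{T-t+1}\sum_{f \in \mathcal L \setminus \{f_1,\ldots,f_{t-1}\}} f(w)$ for the average of losses not yet played at time $t$; the random-order property immediately yields $\E[f_t(w_t)\mid \mathcal F_{t-1}] = G_t(w_t)$, and analogously for gradients.

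First, I decompose the expected regret as
\begin{equation*}
\E[\mathrm{Reg}_T] \;=\; \underbrace{\sum_{t=1}^{T} \E\bigl[F(w_t)-F(w^*)\bigr]}_{(I)} \;+\; \underbrace{\sum_{t=1}^{T} \E\bigl[(f_t-F)(w_t)-(f_t-F)(w^*)\bigr]}_{(II)} .
\end{equation*}
Term (I) is the regret against the $\lambda$-strongly convex cumulative loss, and I would bound it via the standard diminishing step-size SGD analysis for strongly convex objectives, which requires controlling the conditional gradient bias $\E[\nabla f_t(w_t)\mid \mathcal F_{t-1}] - \nabla F(w_t) = \tfrac{1}{T-t+1}\sum_{s<t}\bigl(\nabla F(w_t)-\nabla f_s(w_t)\bigr)$. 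Term (II) is a pure generalization-type gap. Both reduce to the same underlying question: how well does an empirical average over a subset of $\mathcal L$ approximate $F$ when evaluated at the current SGD iterate.

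The key reduction is a leave-one-out stability argument tailored to without-replacement sampling. For each $s<t$, consider a coupled iterate $w_t^{(s)}$ obtained by swapping $f_s$ with a function drawn uniformly from $\mathcal L \setminus \{f_1,\ldots,f_{t-1}\}$. Then $w_t^{(s)}$ is independent of $f_s$ conditional on the remaining history, and by symmetry of the random order $\E\bigl[f_s(w_t^{(s)}) - F(w_t^{(s)})\bigr]=0$; Lipschitzness gives $|f_s(w_t)-f_s(w_t^{(s)})| \le G\,\|w_t-w_t^{(s)}\|$ and smoothness gives the analogue for gradients. Thus both the bias in (I) and the generalization gap (II) are controlled by the \emph{average swap-stability} $\E\|w_t-w_t^{(s)}\|$ of the algorithm.

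The hard part is the stability analysis itself. Since individual $f_s$ are only smooth and may be nonconvex, worst-case stability blows up like $(1+\eta_t\beta)^t$; the classical strongly convex stability result does not apply directly. The saving grace is that $F$ is $\lambda$-strongly convex \emph{on average}, so a single SGD step is contractive by roughly $1-\eta_t\lambda$ in expectation even though pathwise it is only $(1+\eta_t\beta)$-expansive. A refined \emph{average} stability analysis, in which the recursion for $\E\|w_t-w_t^{(s)}\|$ is written against the average strongly-convex drift rather than the worst case, should give a per-swap stability of order $O(G/(\lambda t))$. Summing over $s<t$ against the $1/(T-t+1)$ prefactor in the bias, and then over $t$, yields total bias $O((G^2/\lambda)\log T)$, matching term (I) and producing the claimed $O((G^2/\lambda)\log T)$ expected regret.
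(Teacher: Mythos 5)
There is a genuine gap, and it is exactly the point that forces the paper to introduce \cref{alg:ressgd}. Your plan analyzes plain SGD on the without-replacement stream and claims a per-swap average stability of order $O(G/(\lambda t))$ for \emph{all} $t\le T$, driven by the contraction-in-expectation of a gradient step against the conditional average of the remaining losses. But that conditional average, $G_t(w)=\tfrac{T}{T-t+1}\bigl(F(w)-\tfrac1T\sum_{s<t}f_s(w)\bigr)$, is only strongly convex while the subtracted empirical part stays ``small'': $\tfrac1T\sum_{s<t}f_s$ is $(t-1)\beta/T$-smooth, so strong convexity of $G_t$ (and hence the expected contraction you rely on, cf.\ \cref{lem:sgdcontract,lem:strong_minus_smooth}) survives only up to $t\approx T\lambda/\beta=T/\kappa$. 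Beyond that point the individual losses' non-convexity takes over, the gradient map can be expansive in expectation, and the stability recursion you sketch blows up; this is why \cref{lem:sgdwor_stab} is stated only for $m\le T/2\kappa$. Consequently your argument, carried out correctly, recovers at best the guarantee of \cref{alg:sgdwor} (\cref{thm:sgdwor}), which stops the SGD updates at $\tau=T/2\kappa$ and pays an extra factor of $\kappa$, i.e.\ $O((\beta G^2/\lambda^2)\log T)$ --- not the claimed $O((G^2/\lambda)\log T)$.

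The missing idea is the reservoir-style resampling of \cref{alg:ressgd}: at round $t$ the gradient is taken at a point $z_t'$ which, with probability $(t-1)/T$, is a uniformly random \emph{past} sample, making the sequence $z_t'$ i.i.d.\ uniform over $Z$ (\cref{lem:ressamp}). This keeps the conditionally expected loss equal to $F$ itself --- hence $\lambda$-strongly convex --- at every round, so the contraction and the $O(G/(\lambda m))$ stability rate hold for all $T$ rounds (\cref{lem:ressgd_iter_stab} via \cref{lem:stability_main} with $\delta=1/T$), and the gradient-bias term in your decomposition (I) vanishes outright rather than needing a separate swap argument. Two smaller issues: your step size $\eta_t=1/(\lambda t)$ violates the requirement $\eta\le\mu/\beta^2$ needed for the expected contraction in early rounds (the paper uses $\eta_t=\min\{\tilde\mu/\beta^2,\,2/(\tilde\mu t)\}$ and pays the additive $\beta^2D^2/\lambda$ term for it), and for the bias in (I) the relevant Lipschitz constant is $G+\beta D$ (as in \cref{lem:sgdwor_approx_stab}), not $G$; neither is fatal, but the loss of strong convexity of the conditional expectation for $t\gtrsim T/\kappa$ is.
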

Although the above result matches the optimal result for the \emph{individually} strongly convex setting (up to additive factors), \cref{alg:ressgd} requires memory linear in $T$. 
This disadvantage motivates another algorithm, which trades off an extra factor of $\kappa=\beta/\lambda$ in the regret for lower memory requirements.
\begin{theorem*}[informal]
    There exists an algorithm (\cref{alg:sgdwor}) for random order online optimization that requires memory linear in $d$, and obtains regret of
    $
        O\left((\ifrac{\beta G^2}{\lambda^2}) \log T\right)
    $ 
    in expectation.
\end{theorem*}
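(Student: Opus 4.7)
The plan is to analyze \cref{alg:sgdwor}---a projected SGD with step sizes $\eta_t = \Theta(1/(\lambda t))$, clipped at $1/\beta$ during a burn-in phase---through a stability-based decomposition of the regret. Since $\sum_{t=1}^T f_t = T F$, we split
\begin{equation*}
\sum_{t=1}^T \bigl(f_t(w_t) - f_t(w^*)\bigr)
=
\underbrace{\sum_{t=1}^T \bigl(F(w_t) - F(w^*)\bigr)}_{\text{optimization error}}
+
\underbrace{\sum_{t=1}^T \bigl(f_t(w_t) - F(w_t)\bigr)}_{\text{generalization gap}},
\end{equation*}
and aim to bound each summand in expectation by $O((\beta G^2/\lambda^2)\log T)$.

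The central device is a coupling: for each $t$ we introduce a \emph{ghost iterate} $\tilde w_t$, produced by a suitably coupled without-replacement trajectory in which the $t$-th loss plays no causal role, so that by an exchangeability argument (analogous to the stability-implies-generalization argument of Hardt-Recht-Singer but adapted to without-replacement sampling) we have $\E[f_t(\tilde w_t)] = \E[F(\tilde w_t)]$, while $w_t$ and $\tilde w_t$ differ only through a single-sample swap in the training trajectory. Lipschitzness then yields $|\E[f_t(w_t) - F(w_t)]| \le 2G\,\E\|w_t - \tilde w_t\|$, reducing the generalization gap to an average-stability quantity. For the optimization error we run the standard strongly convex SGD potential argument on $\|w_t - w^*\|^2$; the obstacle is that $\nabla f_t(w_t)$ is \emph{biased} as an estimator of $\nabla F(w_t)$ (conditionally on the past, $f_t$ is uniform only over the remaining pool), which we correct by substituting $\nabla f_t(\tilde w_t)$---unbiased for $\nabla F(\tilde w_t)$---at a cost of $\beta\,\E\|w_t - \tilde w_t\|$ per step via smoothness. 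The resulting recursion matches the classical strongly convex SGD recursion up to an additive stability penalty.

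It then remains to bound the average stability $\E\|w_t - \tilde w_t\|$. Because $\eta_s \le 1/\beta$, each SGD step on a $\beta$-smooth loss is non-expansive, so a single-sample swap injects at most $O(\eta_\tau G)$ into the trajectory at the swap position $\tau$ and propagates without amplification through the remaining steps, giving $\E\|w_t - \tilde w_t\| = O(G\eta_t) = O(G/(\lambda t))$. Summing the stability penalties---$(\beta/\lambda)\,\E\|w_t - \tilde w_t\|$ from the optimization recursion and $2G\,\E\|w_t - \tilde w_t\|$ from the generalization gap---over $t = 1, \ldots, T$ produces the desired $O((\beta G^2/\lambda^2)\log T)$ regret, plus an $O(\kappa G D)$ contribution from the burn-in phase of $O(\kappa)$ rounds.

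The main obstacle will be setting up the coupling so that all three requirements hold simultaneously: (i) the ghost is unbiased in the sense $\E[f_t(\tilde w_t)] = \E[F(\tilde w_t)]$ (needed for the generalization gap), (ii) $\nabla f_t(\tilde w_t)$ has the correct conditional expectation $\nabla F(\tilde w_t)$ (needed to replace the biased gradient inside the optimization recursion), and (iii) the coupling realizes only a single-sample swap between $w_t$ and $\tilde w_t$ (so that non-expansive propagation yields $O(G\eta_t)$ stability). Closing the strongly convex SGD recursion with biased gradients, in particular, is the step where the classical i.i.d.\ analysis breaks down; moreover, the step-size restriction $\eta_t \le 1/\beta$ that the stability argument imposes is precisely what introduces the extra factor $\kappa = \beta/\lambda$ compared to the optimal bound attained by \cref{alg:ressgd}.
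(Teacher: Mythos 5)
There is a genuine gap at the heart of your stability argument. You claim that because $\eta_s \le 1/\beta$, each SGD step on a $\beta$-smooth loss is non-expansive, so a single-sample swap propagates without amplification and yields $\E\norm{w_t - \tilde w_t} = O(G/(\lambda t))$. Non-expansiveness of the map $x \mapsto x - \eta\nabla f(x;z)$ under $\eta \le 1/\beta$ requires \emph{convexity} of the individual loss (via co-coercivity of its gradient); in this model the individual losses are only smooth and may be non-convex, so the step is merely $(1+\eta\beta)$-Lipschitz and perturbations can amplify multiplicatively, e.g.\ by roughly $\prod_s (1+\beta\eta_s) \approx t^{\kappa}$ with $\eta_s \approx 1/(\lambda s)$. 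The paper's replacement for your non-expansiveness claim is \cref{lem:sgdcontract}: the gradient step is contractive only \emph{in expectation}, and only when (i) the \emph{conditionally} expected loss at that step is strongly convex and (ii) the step size satisfies $\eta \le \mu/\beta^2$ --- a factor $\kappa$ smaller than your $1/\beta$ cap. So both your stability rate and your step-size clipping are unsupported as stated.

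A second, related gap is that even with the correct expected-contraction lemma you cannot run this argument over all $T$ rounds. Conditioned on the prefix $z_{1:t-1}$, the next loss is uniform over the \emph{remaining} pool, whose average equals $\frac{T}{T-t+1}\big(F - \frac{1}{T}\sum_{s<t} f(\cdot;z_s)\big)$; its strong-convexity modulus degrades like $\lambda - t\beta/T$ and can vanish (or turn negative) once $t \gtrsim T/\kappa$. This is why the paper's stability bound (\cref{lem:sgdwor_stab}) holds only for $m \le T/2\kappa$, and why \cref{alg:sgdwor} stops the SGD updates after $\tau = T/2\kappa$ rounds and plays the \emph{average iterate} $\bar w$ for all remaining rounds, bounding its suboptimality by convexity of $F$ and its generalization gap by the averaged stability of the prefix iterates. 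This early-stopping-plus-averaging structure is the missing algorithmic ingredient in your plan, and it --- not the step-size cap per se --- is where the extra $\kappa$ factor in the $O\big((\beta G^2/\lambda^2)\log T\big)$ bound actually comes from: only $\tau = T/2\kappa$ informative SGD steps are taken, so the per-round excess risk of $\bar w$, of order $(G^2/\lambda\tau)\log\tau$, is paid on the remaining $\approx T$ rounds. Your handling of the gradient bias via the ghost iterate is in the right spirit (the paper does this through \cref{lem:bto_via_stability} applied to $\ell(w;z) = \nabla f(w;z)\T(w-w^*)$), but it too rests on the same stability rate, so it inherits both gaps above.
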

Big-$O$ notation in both theorems hides additive factors polynomial in problem parameters $\beta, G$ and $D$.
By comparison, \citet{garber2020online} obtain  a regret bound of $O((d G^2/\lambda^3)\log T)$ w.h.p.~for this setting. Both of our algorithms completely remove the dimension factor $d$, and reduce scaling w.r.t.~the strong convexity parameter by a factor of $1/\lambda^2$ and $1/\lambda$ respectively. Moreover, their results require the losses to have a Lipschitz Hessian, an assumption we do not make.

In addition, we consider the case that $F$ is convex (but not strongly convex), and apply our above results by means of regularization.
As a corollary of the first theorem, we obtain for this setting regret that scales as $\Otilde(\sqrt T)$, matching up to logarithmic and additive factors the optimal rate for the \emph{individually} convex setting.
Similarly, the second theorem implies a $\Otilde(T^{2/3})$ regret algorithm which is also memory efficient for the convex $F$ case.
Notably, a similar reduction applied to the results of \citet{garber2020online} would yield a regret bound of $\Otilde(d T^{3/4})$. This highlights the significance of our improvement to the dependence on $\lambda$.

\subsection{Overview of techniques}

Our approach builds on the observation that regret on a random order loss sequence may be expressed as the average generalization error w.r.t~a without-replacement training sample.
In light of this, we relate random order regret to a suitable notion of algorithmic stability, mimicking in a sense a well known argument previously employed in the context of i.i.d.~sampled training sets~\citep{bousquet2002stability, shalev2010learnability}.
At a high level, stability measures the sensitivity of an algorithm to small changes in its training set, and is a classical approach to proving generalization bounds \citep{bousquet2002stability, shalev2010learnability}. 
More often than not, the particular notion used in practice is \emph{uniform} stability, where sensitivity is measured w.r.t.~the worst case small change in the training dataset;
$\max_{S \subseteq \Z(m)} \norm{f(\Alg(S) - f(\Alg(S')))}$.

Our key insight is that while we cannot hope for uniform stability as losses are not assumed to be convex individually, we may exploit strong convexity of the population loss to show SGD admits \emph{average} stability;
$\E_{S\sim \Z(m)}\norm{f(\Alg(S)) - f(\Alg(S'))} \leq \epsilon(m)$.
In particular, we prove the gradient update is \emph{contractive in expectation};
$\E \norm{x - \eta\nabla f(x) - (y - \eta\nabla f(y))} \lneq \E \norm{x - y}$ under the cumulative strong convexity assumption. In turn, this yields a stability result which implies regret that scales with $1/t$ for the early rounds up to $t \approx T/\kappa$, where $\kappa$ is the condition number of the problem. In short, as the game progresses the bias of the random order gradient estimates increases, and the gradient update becomes unstable.

To overcome the loss of stability in later rounds, we devise a simple online sampling mechanism that generates i.i.d.~uniform samples from a random order distribution, effectively ensuring unbiased gradient estimates throughout all $T$ rounds, and consequently optimal regret up to additive factors.
Finally, our approach allows us to develop an analysis framework that naturally accommodates SGD based algorithms in the random order model, and in a broader sense establish stability of SGD in a new, relatively general setting.

\subsection{Related work} 

Random-Order Online Optimization was proposed in the recent work of \cite{garber2020online} (where it is referred to as ROOCO), who establish an $O((d G^2 / \lambda^3)\log T)$ regret upper bound for $G$-Lipschitz $\lambda$-cumulative-strongly convex losses.
In the classical OCO setup~\citep{zinkevich2003online, hazan2019introduction},
under the assumption the losses are $\lambda$-strongly convex individually, it is well known the minimax regret scales as $\Theta((G^2/\lambda)\log T)$, and that the lower bound also applies under the assumption the adversary is i.i.d.~stochastic~\citep{hazan2007logarithmic, hazan2011beyond}. In addition, it is well known (see e.g., \cite{bubeck2015convex}) that the upper bound for the i.i.d.~adversary is obtained in expectation by SGD also for non-convex losses, as long as the expected loss is strongly convex.
In this work, we show that the same regret upper bound also holds for the random order adversary.

Also relevant to our work is the study of stability and generalization~\citep{bousquet2002stability,shalev2010learnability} in modern learning theory, and in particular stability properties of SGD.
Proving generalization bounds with stability arguments is a well known approach dating back at least to \cite{rogers1978finite, devroye1979distributiona, devroye1979distributionb}. 
The specific notion of average stability in the i.i.d.~setting we draw upon was defined in \cite{shalev2010learnability}, though many similar measures have appeared in the literature long before their work (see \cite{bousquet2002stability, shalev2010learnability} for an overview).
The influential work of \cite{hardt2016train} gave the first generalization bounds for general forms of SGD with an analysis relying on the notion of uniform stability~\citep{bousquet2002stability}. Since then, several works have used a similar approach to gain further insight into stability and generalization properties of SGD, e.g., \cite{london2017pac, feldman2019high, bassily2020stability}.

A related line of work \citep{gurbuzbalaban2019random, shamir2016without, nagaraj2019sgd, safran2020good, rajput2020closing} studies SGD without-replacement for solving finite-sum optimization problems, a setting commonly encountered in offline machine learning applications.
Here, multiple epochs of SGD are executed over a given training set, with the objective to produce a single output (approximately) minimizing the average loss.
However, the majority of the results are obtained under the (vastly simplifying) assumption that the individual loss functions are convex, and therefore do not apply in our setting. 
In addition, the performance metric of interest is convergence rate, and not regret which is the focus of our paper. 
In particular, \citet{nagaraj2019sgd} employ the method of exchangeable pairs to relate the average and random order loss to a stability-like property, and obtain optimal (up to polylogarithmic factors) convergence rate for a single epoch, albeit only for individually convex loss functions.

Recently, a number of papers study SGD without-replacement and attempt to relax the convexity assumption \citep{haochen2019random, nguyen2020unified, ahn2020sgd}, but the bounds they obtain are under conditions inapplicable for our setting. Specifically, they impose a requirement that the number of epochs passes a certain threshold strictly larger than one. Moreover, state-of-the-art bounds achieved by \cite{ahn2020sgd} are suboptimal w.r.t ours even had we ignored the epoch requirement.

\section{Setup: Random-Order Online Optimization}
In this section, we review notation and assumptions used throughout the paper, and give the formal definition of the model we consider.
We let $Z = \{\z_1, \ldots, \z_T\}$ denote an arbitrary set of $T$ different datapoints, and denote by $\Dom \subseteq \R^d$ a closed convex set with diameter bounded by $D \eqq \max_{x,y\in\Dom} \norm{x - y}$. In addition, we let $\Proj(x) \eqq \Proj_\Dom(x) \eqq \argmin_{w\in \Dom}\norm{x - w}^2$ denote the orthogonal projection onto $\Dom$.

We consider a loss function $f: \Dom \times Z \to \R$, and denote the average (also expected / population) loss by $F(w) \eqq \frac{1}{T}\sum_{z\in Z} f(w; z)$.
We make the following assumptions;
\begin{assumption}[Individual Lipschitz Continuity]\label{assumption:lipschitz}
    For all $z \in Z$, $f(\,\cdot\,; z)$ is $G$-Lipschitz; namely $\norm{\nabla f(w; z)} \leq G$ for all $w\in \Dom$.
\end{assumption}
\begin{assumption}[Individual smoothness]\label{assumption:smooth}
    For all $z \in Z$, $f(\,\cdot\,; z)$ is $\beta$-smooth; namely
    $\norm{\nabla f(x; z) - \nabla f(y; z)} \leq \beta\norm{x - y}$ for all $x, y \in \Dom$.
\end{assumption}
\begin{assumption}[Cumulative strong convexity]
    $F$ is $\lambda$-strongly convex; namely
    $F(y) \geq F(x) + \nabla F(x)\T(y - x) + \frac{\lambda}{2}\norm{y - x}^2$ for all $x, y \in \Dom$.
\end{assumption}
In addition, we define the condition number of $F$ by $\kappa \eqq \beta/\lambda$.

We will be primarily interested in the random sequence of losses 
$f(\,\cdot\, ; z_t)$ obtained from uniformly random orderings of $Z$.
Let $z_1, \ldots, z_m \sim \Z(m)$ denote a random sequence of $m$ datapoints, where $z_t = \z_{\sigma_t} \in Z$ and $\sigma: [T] \to [T]$ is a uniformly random permutation.
Equivalently, $\Z(m)$ may be also considered as the distribution of $m$ datapoints sampled sequentially without-replacement from $Z$.
Omitting the number of samples parameter and writing $z \sim \Z$ denotes a uniformly random sample of a single datapoint from $Z$. In addition, if $S = (z_1, \ldots, z_m)$ is a sequence of datapoints, $z \sim \Z \setminus S$ denotes a uniformly random sample of a single datapoint from $Z \setminus S$.
In sake of conciseness, we write $S, \tilde z \sim \Z(m, 1)$ to denote a sample of a sequence $S \sim \Z(m)$, followed by a sample from the complement $\tilde z \sim \Z \setminus S$.
This, of course, is equivalent to sampling $z_1, \ldots z_{m+1} \sim \Z(m+1)$, and then setting $S = (z_1, \ldots, z_m)$ and $\tilde z = z_{m+1}$.

Given a random order sequence $z_1, \ldots, z_T \sim \Z(T)$, we consider the problem of minimizing the expected regret with an online algorithm.
We denote the minimizer of the population loss $F$ by $w^* \eqq \argmin_{w\in\Dom} F(w)$, and let $w_t$ denote the iterates produced by an online algorithm $\Alg$. The expected regret of $\Alg$ on $\Z(T)$ is defined as; 
\begin{align*}
    \Re_T \eqq \E\Big[ 
        \sum_{t=1}^T f(w_t; z_t) - f(w^*; z_t)
    \Big],
\end{align*}
where the expectation is over the random order sequence and any randomness potentially introduced by $\Alg$. 
Finally, when a sequence of realized datapoints $z_1, \ldots, z_m$ is clear from context, we let 
$F_m(w) \eqq \frac{1}{m}\sum_{t=1}^m f(w; z_t)$ denote their empirical average loss.

\section{Stability and Generalization Without Replacement}

In this section, we discuss notions of stability and generalization when sampling without-replacement, and give basic results relating to stability of SGD in the setting under consideration.
We work with ordered training sets, and write $S = (z_1, \ldots, z_m)$ to make the ordering explicit in our notation. When such a training set is in context along with another datapoint $\tilde z_i$, we define $S^{(i)} \eqq (z_1, \ldots, z_{i-1}, \tilde z_i, z_{i+1}, \ldots, z_m)$ to be the new training set formed by taking $S$ and swapping the $i$'th datapoint $z_i$ with $\tilde z_i$. Finally, we say $\Alg$ is a learning algorithm if it maps training sets of any length to a decision; $\Alg: Z^* \to \Dom$.

\subsection{Recap: Stability and generalization in the i.i.d.~setting}
In this section we recall the relvant definitions previously studied in the i.i.d.~setting.
Here, we assume the training set $S = (z_1, \ldots, z_m) \sim \dist D^m$ is an i.i.d.~sample of $m$ datapoints from some predefined distribution $\dist D$ over elements of $Z$.

\begin{definition*}[on-average generalization; \cite{shalev2010learnability, bousquet2002stability}]
    We say a learning algorithm $\Alg$ on-average-generalizes with rate $\epsilon_{\text{gen}}(m)$ if for all $m$;
    \begin{align*}
        |\Ee_{S \sim \dist D^m} [
            F_m(\Alg (S)) - F(\Alg (S))
        ]| 
        \leq \epsilon_{\text{gen}}(m).
    \end{align*}
\end{definition*}

The definition of stability that follows relates a small change in the training set $S \to S^{(i)}$ to the change in the learning algorithm's performance. Here, as one would expect, the swapped datapoint $\tilde z_i \sim \dist D$ is sampled independently from the original training set sample $S$.

\begin{definition*}[average-RO stability; \cite{shalev2010learnability, bousquet2002stability}]\label{def:iid_stability}
    We say a learning algorithm $\Alg$ is average-replace-one stable with rate $\epsilon_{\text{stab}}(m)$ if for all $m$;
    \begin{align*}
        \left|\frac{1}{m} \sum_{i=1}^m \Ee_{S \sim \D^m, \tilde z_i \sim \dist D} [
            f(\Alg (S); \tilde z_i) - f(\Alg (S^{(i)}; \tilde z_i)
        ]\right| 
        \leq \epsilon_{\text{stab}}(m).
    \end{align*}
\end{definition*}
With the above definitions, it is well known stability and generalization are in fact equivalent (e.g.,~\cite{shalev2010learnability}). 

\subsection{Stability and generalization without replacement}

In this section we discuss the analogous notions suitable for sampling without-replacement.
We adopt the term out-of-sample (oos) to distinguish the without-replacement setting, and say $\Alg$ on-average-generalize-oos with rate $\epsilon_{\text{gen}}(m)$ if
\begin{align}\label{eq:oos_gen}
    \left| \Ee_{S, \tilde z \sim \Z(m, 1)} \big[
        f(\Alg (S); \tilde z) - F_m(\Alg (S) )
    \big] \right| \leq \epsilon_{\text{gen}}(m)
    .
\end{align}
Note that here, generalization is measured w.r.t.~a datapoint drawn out-of-sample, and in particular \emph{not independently} of $S$.
The situation is similar for the notion of stability;
while in the i.i.d.~case the ``non-coupled'' index $i$ in $S^{(i)}$ hosts a different datapoint sampled independently, here this datapoint is sampled from the complement $Z \setminus S$. 
The analogous definition for stability without-replacement says a learning algorithm $\Alg$ is average-replace-one-oos stable with rate $\epsilon_{\text{stab}}$ if
\begin{align}\label{eq:oos_stab}
    \left|\frac{1}{m} \sum_{i=1}^m \Ee_{S, \tilde z_i \sim \Z(m, 1)} [
        f(\Alg (S); \tilde z_i) - f(\Alg (S^{(i)}); \tilde z_i)
    ]\right| 
    \leq \epsilon_{\text{stab}}(m).
\end{align}
However, it will be more convenient in our case to work with a slightly different definition, which relates to the distance between \emph{outputs} of the learning algorithm, rather than to the change in out-of-sample loss.
We consider w.l.o.g.~\emph{randomized} learning algorithms
$A: Z^* \times \mathcal X \to \Dom$, where $\xi \in \mathcal X$ denotes the internal random seed used by $\Alg$. For convenience, we slightly overload notation and let $\xi \sim \mathcal X$ denote the distribution over $\Alg$'s random seeds.

\begin{definition}[on-average-oos stability]
\label{def:oos_stability}
    A learning algorithm $\Alg$ is on-average-oos stable with rate $\epsilon_{\text{stab}}(m)$ on random order distribution $\Z$ if
    \begin{align*}
        \max_{i \leq m}\Ee_{S,\tilde z_i \sim \Z(m, 1), \xi \sim \mathcal X}\big[
            \norm{\Alg (S; \xi) - \Alg (S^{(i)}; \xi)}
        \big] \leq \epsilon_{\text{stab}}(m)
        .
    \end{align*}
\end{definition}
We wish to draw the reader's attention to two important aspects of the above definition. First, note that the measure is w.r.t.~\emph{random} training sets, which significantly differs from uniform stability where worst case training sets are considered. The maximum in the definition relates to the \emph{index} of the swapped sample, and not to the training sets $S, S^{(i)}$.
Second, the same random seed $\xi$ is fed to $\Alg$ on both training sets, that is, we measure the expected distance between outputs subject to a maximal coupling of the algorithm's randomness.

When we discuss an \emph{online} algorithm $\Alg$ and a random order sequence $z_1, \ldots, z_T \sim \Z(T)$ is in context, we denote by $z_{1:m} = (z_1, \ldots, z_m)$ the prefix of length $m$.
In addition, if $w_{m+1} = \Alg(z_{1:m}; \xi)$, we denote the coupled iterate by
\begin{equation}\label{eq:coupled_iterate}
    w_{m+1}^{(i)} 
    \eqq \Alg(z_{1:m}^{(i)}; \xi)
    = \Alg(z_1, \ldots, z_{i-1}, \tilde z_i, z_{i+1}, \ldots, z_m; \xi), \quad \text{where } \tilde z_i = z_{m+1}. 
\end{equation}
With this notation, if $\Alg$ satisfies \cref{def:oos_stability} with rate $\epsilon(m)$, we have $\E\norm{w_{m+1} - w_{m+1}^{(i)}} \leq \epsilon(m)$ for all $i \leq m$. 
Note that an online learning algorithm is nothing more than a learning algorithm that respects the order of the samples it is given as input.
To conclude this section, we relate the population and out-of-sample performance gap to stability of the learning algorithm, as provided by the below lemma.
\begin{lemma} \label{lem:bto_via_stability}
    Assume $\Alg$ is on-avg-oos stable with rate $\epsilon(m)$, and let $\ell: \Dom \times Z \to \R$ be any $L$-Lipschitz loss function. Then;
    \[
        \Big|\Ee_{S, \tilde z \sim \Z(m, 1), z \sim \Z} 
            \big[\ell (\Alg(S); \tilde z) - \ell (\Alg(S); z)   \big]\Big|
        \leq
        \frac{L m}{T}\epsilon(m).
    \]
    If $\Alg$ is an online algorithm producing iterates 
    $w_t$ and $z_1, \ldots, z_t \sim \Z(t)$ a random order sample, this immediately implies
    \[
        \E[ f(w_t; z_t) - F(w_t) ] 
        \leq \frac{G (t-1)}{T} \epsilon(t-1).
    \]
\end{lemma}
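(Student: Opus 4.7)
The plan is to reduce the out-of-sample/uniform-sample gap to the average stability quantity by exploiting exchangeability of the without-replacement sample. The key step is to observe that for any fixed index $i \le m$, the joint distribution of $(z_1,\ldots,z_m,\tilde z) \sim \Z(m+1)$ is invariant under swapping coordinates $i$ and $m{+}1$. Applying this to the test function $(x_1,\ldots,x_m,y) \mapsto \ell(\Alg(x_1,\ldots,x_m);\,x_i)$ yields the identity
\[
    \Ee_{S,\tilde z\sim\Z(m,1),\xi}\!\big[\ell(\Alg(S;\xi);z_i)\big]
    \;=\;
    \Ee_{S,\tilde z\sim\Z(m,1),\xi}\!\big[\ell(\Alg(S^{(i)};\xi);\tilde z)\big],
\]
where $S^{(i)}$ is formed by placing $\tilde z$ at position $i$, consistent with the paper's notation.

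Next, I would decompose $\Ee_{z\sim\Z}[\ell(\Alg(S);z)]$ by separating $z\in S$ from $z\in Z\setminus S$. Since $|S|=m$ and $|Z\setminus S|=T-m$, this writes it as
\[
    \Ee_{S,z}\!\big[\ell(\Alg(S);z)\big]
    \;=\;
    \tfrac{m}{T}\,\Ee_S\!\big[F_m(\Alg(S))\big]
    \;+\;\tfrac{T-m}{T}\,\Ee_{S,\tilde z}\!\big[\ell(\Alg(S);\tilde z)\big].
\]
Subtracting this from $\Ee[\ell(\Alg(S);\tilde z)]$, the mixed term collapses and leaves
\[
    \Ee\!\big[\ell(\Alg(S);\tilde z) - \ell(\Alg(S);z)\big]
    \;=\;
    \tfrac{m}{T}\,\Ee\!\big[\ell(\Alg(S);\tilde z) - F_m(\Alg(S))\big].
\]
Now expanding $F_m(\Alg(S)) = \tfrac{1}{m}\sum_{i=1}^m \ell(\Alg(S);z_i)$ and invoking the exchangeability identity above on each summand, the difference becomes $\tfrac{1}{m}\sum_{i=1}^m \Ee[\ell(\Alg(S;\xi);\tilde z) - \ell(\Alg(S^{(i)};\xi);\tilde z)]$. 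Applying $L$-Lipschitzness of $\ell(\,\cdot\,;\tilde z)$ pointwise in $\xi$, bounding each term by $L\,\Ee\norm{\Alg(S;\xi) - \Alg(S^{(i)};\xi)}$, and using the on-average-oos stability bound from \cref{def:oos_stability} gives $L\epsilon(m)$, whence the main inequality follows.

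For the online corollary, I would instantiate the lemma with $\ell = f$, $L = G$, $m = t-1$, $S = z_{1:t-1}$, and $\tilde z = z_t$; noting that the conditional law of $z_t$ given $z_{1:t-1}$ under a random order of $Z$ is exactly uniform over $Z\setminus S$, and that $\Ee_{z\sim\Z}[f(w_t;z)] = F(w_t)$ (once we condition on $w_t$ and take expectations), the bound $\tfrac{G(t-1)}{T}\epsilon(t-1)$ on $\Ee[f(w_t;z_t) - F(w_t)]$ drops out directly.

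The only subtle point is the exchangeability swap, which is the crux of the argument; everything else is bookkeeping and an application of Lipschitz continuity. No concentration or uniform-convergence machinery is needed, and the coupling of the random seed $\xi$ across $S$ and $S^{(i)}$ in \cref{def:oos_stability} is exactly what makes the pointwise Lipschitz bound valid before taking expectation.
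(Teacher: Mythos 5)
Your proposal is correct and follows essentially the same route as the paper: the same in-sample/out-of-sample decomposition of $\Ee_{z\sim\Z}[\ell(\Alg(S);z)]$, the same exchangeability swap (the paper isolates it as \cref{lem:oos_gen_stab}), and the same use of Lipschitzness together with the shared-seed coupling in \cref{def:oos_stability}. Your explicit statement of the swap as invariance of $\Z(m+1)$ under transposing coordinates $i$ and $m+1$ is in fact a slightly more careful rendering of the paper's one-line distributional-equality argument, and the online corollary is handled identically.
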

The proof of \cref{lem:bto_via_stability} hinges on the equivalence of stability and generalization in the without-replacement setting. 
We establish this fact next and subsequently proceed to prove \cref{lem:bto_via_stability}.
\begin{lemma}\label{lem:oos_gen_stab}
    Let $\ell: \Dom \times Z \to \R$ be any loss function, and let $m \leq T$. For any learning algorithm $\Alg$, it holds that
    \[
        \Ee_{S, \tilde z \sim \Z(m, 1)}
            [\ell (\Alg(S); \tilde z) - \hat \ell_S(\Alg(S))]
        = \frac{1}{m}\sum_{i=1}^m \Ee_{S, \tilde z_i \sim \Z(m, 1)}
            [\ell(\Alg(S); \tilde z_i) 
                - \ell(\Alg(S^{(i)}); \tilde z_i)]
        .
    \]
\end{lemma}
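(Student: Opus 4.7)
The plan is to exploit the exchangeability of without-replacement sampling via a standard swap-of-variables argument. First, I would note that drawing $S,\tilde z \sim \Z(m,1)$ is equivalent to sampling a single uniformly random injective tuple $(Y_1,\ldots,Y_{m+1})$ from $Z$ without replacement, and then setting $z_j = Y_j$ for $j \le m$ and $\tilde z = Y_{m+1}$. The joint law of $(Y_1,\ldots,Y_{m+1})$ is invariant under any fixed permutation of its coordinates; in particular it is invariant under the transposition $\pi_i$ that swaps positions $i$ and $m+1$. This invariance is the only probabilistic content of the proof.

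Next, expanding $\hat\ell_S(\Alg(S)) = \frac{1}{m}\sum_{i=1}^m \ell(\Alg(S); z_i)$ and observing that $\tilde z$ and $\tilde z_i$ denote the same random variable (the extra draw) regardless of the index label, the claim reduces to verifying, for every $i \in [m]$,
\[
\E_{S,\tilde z \sim \Z(m,1)}\big[\ell(\Alg(S); z_i)\big] \;=\; \E_{S,\tilde z_i \sim \Z(m,1)}\big[\ell(\Alg(S^{(i)}); \tilde z_i)\big].
\]
To establish this identity I would apply the invariance under $\pi_i$ to the deterministic function $h(y_1,\ldots,y_{m+1}) \eqq \ell(\Alg(y_1,\ldots,y_m); y_i)$. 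Under the swap, $y_i$ and $y_{m+1}$ exchange roles, so the training set fed to $\Alg$ becomes $(y_1,\ldots,y_{i-1},y_{m+1},y_{i+1},\ldots,y_m)$ — which in the original variable names is exactly $S^{(i)}$ — while the evaluation point becomes $y_{m+1}$, corresponding to $\tilde z_i$. Hence $\E[h(Y_1,\ldots,Y_{m+1})] = \E[h(Y_{\pi_i(1)},\ldots,Y_{\pi_i(m+1)})]$ gives the required equality.

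Finally, I would assemble the two sides: averaging the identity above over $i$ yields $\frac{1}{m}\sum_i \E[\ell(\Alg(S); z_i)] = \frac{1}{m}\sum_i \E[\ell(\Alg(S^{(i)}); \tilde z_i)]$, and subtracting both sides from $\E[\ell(\Alg(S);\tilde z)] = \frac{1}{m}\sum_i \E[\ell(\Alg(S);\tilde z_i)]$ produces the asserted equation. There is no real analytical obstacle; the only delicate point is bookkeeping the effect of the transposition $\pi_i$ carefully enough to recognize $S^{(i)}$ in the first $m$ coordinates after the swap, and to keep straight that $\tilde z_i$ and $\tilde z$ are interchangeable symbols for the $(m{+}1)$-th coordinate. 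Once this identification is made the lemma is immediate from the exchangeability of uniform without-replacement sampling.
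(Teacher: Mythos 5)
Your proof is correct and takes essentially the same approach as the paper: the paper likewise expands $\hat\ell_S(\Alg(S))$ term by term and uses the fact that, under $S,\tilde z_i \sim \Z(m,1)$, the pair $(\Alg(S), z_i)$ and $(\Alg(S^{(i)}), \tilde z_i)$ are identically distributed, which is precisely the exchangeability-under-transposition argument you spell out more explicitly.
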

\begin{proof}
    Let $i \in [m]$, and consider the random sample $S, \tilde z_i \sim \Z(m, 1)$. Both marginals $S=(z_1, \ldots, z_m)$ and $S^{(i)}=(z_1, \ldots, z_{i-1}, \tilde z_i, z_{i+1}, \ldots, z_m)$ are uniformly random samples of $m$ datapoints without-replacement. This implies $\ell(\Alg(S); z_i)$ and $\ell(\Alg(S^{(i)}); \tilde z_i)$ follow the same distribution, therefore
    \begin{align*}
        \Ee_{S, \tilde z \sim \Z(m, 1)}
            [\ell (\Alg(S); \tilde z) - \hat \ell_S(\hat w_S)]
        &=\frac{1}{m}\sum_{i=1}^m 
            \Ee_{S, \tilde z_i \sim \Z(m, 1)}
                [\ell (\Alg(S); \tilde z_i) - \ell_S(\Alg(S), z_i)]
        \\
        &= \frac{1}{m}\sum_{i=1}^m 
            \Ee_{S, \tilde z_i \sim \Z(m, 1)}
                [\ell (\Alg(S); \tilde z_i) 
                - \ell_S(\Alg(S^{(i)}), \tilde z_i)]
        .
        \qedhere
    \end{align*}
\end{proof}

\begin{proof}[of \cref{lem:bto_via_stability}]
    To ease notational clutter, for any $S\subseteq Z$, denote $\hat w_S \eqq \Alg(S), \hat w_{S^{(i)}} \eqq \Alg(S^{(i)})$, and $\hat \ell_S(\cdot) \eqq \frac{1}{m}\sum_{z \in S} \ell(\cdot; z)$.
    Note that
    \begin{align*}
        \Ee_{S \sim \Z(m), z \sim \Z}[\ell(\hat w_S; z)]
        &= \Ee_{S \sim \Z(m)}\Big[ 
            \frac{m}{T} \frac{1}{m} \sum_{z \in S} \ell(\hat w_S; z) 
            +
            \frac{T-m}{T} \frac{1}{T-m} \sum_{\tilde z \in Z\setminus S} \ell(\hat w_S; \tilde z) 
            \Big]
        \\
        &=  \Ee_{S, \tilde z \sim \Z(m, 1)} 
            \Big[\frac{m}{T} \hat \ell_S(\hat w_S)
            + \Big(1 - \frac{m}{T} \Big) 
                \ell(\hat w_S; \tilde z)
            \Big].
    \end{align*}
    Therefore,
    \begin{align}
        \Ee_{S, \tilde z \sim \Z(m, 1), z \sim \Z} 
            \big [\ell (\hat w_S; \tilde z) - \ell (\hat w_S; z)  \big]
        &= \frac{m}{T} \Ee_{S, \tilde z \sim \Z(m, 1)} 
            \big [\ell (\hat w_S; \tilde z) - \hat \ell_S (\hat w_S)  \big].
        \nonumber \\
        &= \frac{1}{T} \sum_{i=1}^m \Ee_{S, \tilde z_i \sim \Z(m, 1)}
        [\ell(\hat w_S; \tilde z_i) 
            - \ell(\hat w_{S^{(i)}}; \tilde z_i)],
        \label{eq:btostab_1}
    \end{align}
    where the second equality follows from \cref{lem:oos_gen_stab}.
    Considering now a randomized learning algorithm $\Alg$, we have for all $i \leq m$;
    \begin{align*}
        &| \Ee_{S, \tilde z_i \sim \Z(m, 1)}
            [\ell(\hat w_S; \tilde z_i) 
                - \ell(\hat w_{S^{(i)}}; \tilde z_i)] |
        \\
        &= | \Ee_{S, \tilde z_i \sim\Z(m, 1), \xi_1 \sim \mathcal X, \xi_2 \sim \mathcal X}[
            \ell(\Alg (S; \xi_1); \tilde z_i) - \ell (\Alg (S^{(i)}; \xi_2); \tilde z_i)] |
        \\
        &= | \Ee_{S, \tilde z_i \sim\Z(m, 1), \xi \sim \mathcal X}[
            \ell(\Alg (S; \xi); \tilde z_i) - \ell (\Alg (S^{(i)}; \xi); \tilde z_i)] |
        \\
        &\leq L \Ee_{S, \tilde z_i \sim\Z(m, 1), \xi \sim \mathcal X}
            \big[
                \norm{\Alg (S; \xi) - \Alg (S^{(i)}; \xi)}
            \big]
        \\
        &\leq L \epsilon(m).
    \end{align*}
    In the above derivation, the second equality follows by linearity of expectation, the first inequality by the Lipschitz assumption on $\ell$, and the second inequality by our assumption that $\Alg$ is on-avg-oos stable with rate $\epsilon(m)$ (\cref{def:oos_stability}).
    Combining the latest derivation with \cref{eq:btostab_1}, we obtain
    \[
        |\Ee_{S, \tilde z \sim \Z(m, 1), z \sim \Z} 
            \big [\ell (\hat w_S; \tilde z) - \ell (\hat w_S; z)  \big] |
        \leq \frac{1}{T} m L \epsilon(m),
    \]
    and the result follows.
\end{proof}

\subsection{Average stability of SGD}
\label{sec:stability}
In this section, we develop the basic tools employed to establish that under appropriate conditions, SGD is algorithmically stable in the sense of \cref{def:oos_stability}.
Crucially, by considering \emph{average} stability we are able to leverage strong convexity of the expected function $F$ and prove the desired result.
For $w\in\Dom$ and $\psi: \Dom \to \R$, we denote by $\Gstep(w; \psi, \eta) = \Proj(w - \eta \nabla \psi(w))$ a projected gradient descent step from $w$. 
Our key lemma stated below, says a gradient step on a random function $\psi$ is contractive in expectation when $\psi$ is strongly convex in expectation, and the step-size is sufficiently small. 
\begin{lemma} \label{lem:sgdcontract}
    Consider an arbitrary distribution $\dist P$ of $G$-Lipschitz and $\beta$-smooth functions $\psi: \Dom \to \R$ such that $\Psi(w) \eqq \E \psi (w)$ is $\mu$-strongly-convex.
    Then for any $x, y\in \Dom$, a gradient descent step with step-size $\eta\leq \mu/\beta^2$ satisfies;
    \[
        \Ee_{\psi \sim\mathcal P} 
            \norm{\Gstep(x; \psi, \eta) - \Gstep(y; \psi, \eta)} 
        \leq \Big(1 - \frac{\eta \mu}{2} \Big)\norm{x - y}.
    \]
\end{lemma}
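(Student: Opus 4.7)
My plan is to reduce the claim to a computation on the squared norm, where linearity of expectation lets me invoke strong convexity of $\Psi = \E\psi$ even though individual $\psi$'s are non-convex. The subtlety is that the expected norm $\E\|\cdot\|$ doesn't factor nicely, whereas $\E\|\cdot\|^2$ does; Jensen's inequality bridges the two since $\E\|\cdot\| \leq \sqrt{\E\|\cdot\|^2}$.

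Concretely, I would first use non-expansiveness of the projection $\Pi$ onto $\Dom$ to peel it off: for any realization of $\psi$,
\[
    \norm{\Gstep(x; \psi, \eta) - \Gstep(y; \psi, \eta)}
    \leq \norm{(x - y) - \eta(\nabla\psi(x) - \nabla\psi(y))}.
\]
Taking expectation over $\psi \sim \mathcal{P}$ and then applying Jensen, it suffices to bound $\E \|(x - y) - \eta(\nabla \psi(x) - \nabla \psi(y))\|^2$ by $(1 - \eta\mu)\|x-y\|^2$.

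Now I would expand the square and handle the three resulting terms separately. The cross term equals $-2\eta (x-y)^\T \E[\nabla\psi(x) - \nabla\psi(y)] = -2\eta (x-y)^\T (\nabla\Psi(x) - \nabla\Psi(y))$ by linearity of expectation and the definition of $\Psi$; strong convexity of $\Psi$ bounds this by $-2\eta\mu\|x-y\|^2$. The quadratic term is $\eta^2 \E\|\nabla\psi(x) - \nabla\psi(y)\|^2$, which is at most $\eta^2 \beta^2 \|x-y\|^2$ by $\beta$-smoothness applied pointwise under the expectation. Combining gives
\[
    \E\norm{(x-y) - \eta(\nabla\psi(x) - \nabla\psi(y))}^2
    \leq (1 - 2\eta\mu + \eta^2\beta^2)\norm{x - y}^2.
\]
The step-size condition $\eta \leq \mu/\beta^2$ ensures $\eta^2\beta^2 \leq \eta\mu$, so the right-hand side is at most $(1-\eta\mu)\|x-y\|^2$.

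To finish, I would take square roots using the elementary bound $\sqrt{1-a} \leq 1 - a/2$ valid for $a \in [0,1]$ (which holds here since $\eta\mu \leq \mu^2/\beta^2 \leq 1$), yielding
\[
    \E\norm{(x-y) - \eta(\nabla\psi(x) - \nabla\psi(y))}
    \leq \sqrt{1 - \eta\mu}\,\norm{x-y}
    \leq \Bigl(1 - \tfrac{\eta\mu}{2}\Bigr)\norm{x-y},
\]
which combined with the initial projection step completes the proof. The only delicate point is really the Jensen step: since the individual $\psi$'s are non-convex, no per-realization contraction is available, and one genuinely needs to pass through the squared norm so that linearity of expectation can interact with the strong convexity of $\Psi$.
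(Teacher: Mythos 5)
Your proof is correct and follows essentially the same route as the paper: non-expansiveness of the projection, expanding the squared norm so that linearity of expectation turns the cross term into the strong-convexity (monotonicity) term for $\Psi$ while smoothness bounds the quadratic term, then Jensen's inequality and $\sqrt{1-z}\le 1-z/2$ to return from the squared norm. No gaps; your explicit check that $\eta\mu\le 1$ is a nice touch the paper leaves implicit.
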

\begin{proof} 
    By non-expansiveness of the projection operator and elementary algebra, we have that
    \begin{align}
        &\E \norm{\Gstep(x; \psi, \eta) - \Gstep(y; \psi, \eta)}^2 \nonumber
        \\
        &= \E \norm{\Proj(x - \eta \nabla \psi(x)) 
            - \Proj(y - \eta \nabla \psi(y)) }^2 \nonumber
        \\
        &\leq \E \norm{x - \eta \nabla \psi(x) - y + \eta \nabla \psi(y)}^2 \nonumber
        \\
        &= \norm{x - y}^2 
            - 2\eta \E\big[\nabla \psi(x) - \nabla \psi(y)\big]\T(x - y)
            + \eta^2 \E\norm{\nabla \psi(x) - \nabla \psi(y)}^2 \nonumber
        \\
        &= \norm{x - y}^2 
            - 2\eta (\nabla \Psi(x) - \nabla \Psi(y))\T(x - y)
            + \eta^2 \E\norm{\nabla \psi(x) - \nabla \psi(y)}^2. \label{eq:contraction_1}
    \end{align}
    By strong convexity of $\Psi$ (see \cref{lem:strongly_convex_monotone}), we have that
    \[
        (\nabla \Psi(x) - \nabla \Psi(y))\T(x - y) \geq \mu\norm{x - y}^2,
    \]
    and by $\beta$-smoothness of $\psi$ we have;
    \[
        \norm{\nabla \psi(x) - \nabla \psi(y)} \leq \beta \norm{x - y}.
    \]
    Combining \cref{eq:contraction_1} with the last two inequalities we now get that
    \begin{align*}
        \E \norm{\Gstep(x; \psi, \eta) - \Gstep(y; \psi, \eta)}^2
        &\leq (1 - 2 \eta \mu + \eta^2 \beta^2) \norm{x - y}^2.
    \end{align*}
    Finally, by Jensen's Inequality and the assumption that $\eta \leq \mu/\beta^2$;
    \begin{align*}
        \E \norm{\Gstep(x; \psi, \eta) - \Gstep(y; \psi, \eta)}
        &\leq \sqrt{\E \norm{\Gstep(x; \psi, \eta) - \Gstep(y; \psi, \eta)}^2}
        \\
        &\leq \sqrt{(1 - 2 \eta \mu + \eta^2 \beta^2)} \norm{x - y}
        \\
        &\leq \sqrt{(1 - \eta \mu)} \norm{x - y}.
    \end{align*}
    The result now follows by noting that $\sqrt{ 1 - z} \leq 1 - z/2$ for $0 \leq z \leq 1$.
\end{proof}

Notice that the above result dictates for the step-size to be lesser than $\mu/\beta^2$, which is roughly a factor of $1/\kappa$ smaller than needed to ensure (deterministic) contractivity under the assumption of individually strongly convex losses (see \cite{hardt2016train}).
\cref{lem:sgdcontract} serves as a building block to prove stability of SGD subject to relatively generic conditions, which we do next. Loosely speaking, if two training sequences do not differ too much, and the conditional expected loss is strongly convex, the expected distance between SGD iterates shrinks proportionally to the number of iterations executed.

We denote by $\mathrm{GD}(S; w_1, m, \{\eta_t\})$ the iterate $ w_{m+1} \in \Dom$ produced by executing $m$ projected gradient descent steps on a sequence of datapoints $S=(z_1, \ldots, z_\tau), \tau \geq m$, starting at the initial point $w_1 \in \Dom$, with step-sizes $\{\eta_t\}$. When any one of $w_1, m$ or $\{\eta_t\}$ are clear from context, they may be omitted in sake of conciseness.
Our next lemma quantifies how small perturbations in random training sets translate to the expected change in outputs of SGD.
\begin{lemma}\label{lem:stability_main}
    Let $i \leq m \in \mathbb N$, and $S = (z_1, \ldots, z_m), S' = (z_1', \ldots, z_m')$ be two random datapoint sequences. Further, assume that for $0 < \mu$ and $0 \leq \delta \leq \mu/2\beta$, it holds that
    \begin{enumerate}[label=(\roman*)]
        \item $\Pr(z_t \neq z_t' \mid \mathcal F_{t-1}) \leq \delta$ for all $t \neq i$;
        \item $\E\big[ f(w; z_t) \mid \mathcal F_{t-1} \big]
        $
        is $\mu$-strongly convex as a function of 
        $\mathcal F_{t-1}$-measurable $w \in \Dom$
        ,
    \end{enumerate}
    where $\mathcal F_{t-1} \eqq (z_1, z_1', \ldots, z_{t-1}, z_{t-1}')$.
    Then, 
    for step-size schedule 
    $\eta_t = \min\big\{\tilde \mu/\beta^2, 2/\tilde \mu t \big\}$ 
    with $\tilde \mu \eqq \mu - \delta \beta$, 
    and any $w_1 \in \Dom$, we have;
    \[
        \E\norm{\mathrm{GD}(S; w_1, m) - \mathrm{GD}(S'; w_1, m)} 
            \leq \frac{4 G}{\tilde \mu m}(1 + 4\delta m).
    \]
\end{lemma}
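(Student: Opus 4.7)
I would establish a one-step recursion for $\Delta_t \eqq \E\norm{w_t - w_t'}$, where $w_t, w_t'$ denote the GD iterates on $S, S'$ (both started at $w_1$, so $\Delta_1=0$), and then solve it under the prescribed step-size schedule.

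\textbf{Single-step decomposition.} Conditional on $\mathcal F_{t-1}$ I decompose
\begin{align*}
w_{t+1} - w_{t+1}'
&= \bigl[\Gstep(w_t; f(\cdot; z_t), \eta_t) - \Gstep(w_t'; f(\cdot; z_t), \eta_t)\bigr] \\
&\quad + \bigl[\Gstep(w_t'; f(\cdot; z_t), \eta_t) - \Gstep(w_t'; f(\cdot; z_t'), \eta_t)\bigr].
\end{align*}
The first bracket is a ``same-function, different-start'' term: $w_t, w_t'$ are $\mathcal F_{t-1}$-measurable and $\E[f(\cdot;z_t)\mid\mathcal F_{t-1}]$ is $\mu$-strongly convex by (ii), so \cref{lem:sgdcontract} applied conditionally bounds its conditional expected norm by $(1-\eta_t\mu/2)\norm{w_t - w_t'} \le (1-\eta_t\tilde\mu/2)\norm{w_t - w_t'}$, using $\eta_t\le\tilde\mu/\beta^2\le\mu/\beta^2$. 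The second bracket is a ``same-start, different-function'' term: non-expansiveness of $\Proj$ and $G$-Lipschitzness bound its norm by $2\eta_t G\,\mathbf{1}\{z_t\ne z_t'\}$, whose conditional expectation is at most $2\eta_t G\delta$ for $t\ne i$ by assumption (i), and trivially at most $2\eta_t G$ at the single swap round $t=i$.

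\textbf{Recursion.} Taking outer expectation yields
\begin{align*}
\Delta_{t+1}\le(1-\eta_t\tilde\mu/2)\Delta_t + 2\eta_t G\delta \ \text{for}\ t\ne i, \qquad \Delta_{i+1}\le(1-\eta_i\tilde\mu/2)\Delta_i + 2\eta_i G.
\end{align*}

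\textbf{Solving the recursion.} Let $t_0 \eqq 2\beta^2/\tilde\mu^2$ be the threshold at which $\eta_t$ transitions. In the $1/t$ regime $t>t_0$ with $\eta_t=2/(\tilde\mu t)$, multiplying the recursion by $t$ gives the clean identity $t\Delta_{t+1}\le(t-1)\Delta_t + 4G\delta/\tilde\mu$ (with an extra $+4G/\tilde\mu$ at $t=i$), which telescopes to $m\Delta_{m+1}\le t_0\Delta_{t_0+1} + 4G(m-t_0)\delta/\tilde\mu + 4G/\tilde\mu\cdot\mathbf{1}\{i>t_0\}$. In the constant-step regime $t\le t_0$ with $\eta=\tilde\mu/\beta^2$, the contraction factor is $1-1/t_0$ and the additive term is $2\eta G\delta$; iterating gives $\Delta_{t_0+1}\le 4G\delta/\tilde\mu + 2\eta G\cdot\mathbf{1}\{i\le t_0\}$, and multiplying by $t_0$ converts the swap spike $2\eta G=2G\tilde\mu/\beta^2$ into $4G/\tilde\mu$. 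Combining the two phases gives $m\Delta_{m+1}\le 4Gm\delta/\tilde\mu + 4G/\tilde\mu$, which is the claimed bound $\Delta_{m+1}\le \frac{4G}{\tilde\mu m}(1+4m\delta)$ (with slack in the constants).

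\textbf{Main obstacle.} The technical crux is cleanly propagating the lone swap round $i$ through both step-size phases; since at $t=i$ the disagreement probability is not damped by $\delta$, this round contributes a term of order $G/\tilde\mu$ (as opposed to $G\delta/\tilde\mu$) regardless of which phase it lands in, which becomes the ``$+1$'' in the $(1+4m\delta)$ factor after normalizing by $m$. Passing to $\tilde\mu\le\mu$ in the contraction factor is a bookkeeping convenience ensuring that $\eta_t=2/(\tilde\mu t)$ produces exactly the telescoping $t\Delta_{t+1}\le(t-1)\Delta_t+\text{const}$, which is essential for stitching the two regimes together without losing additional factors.
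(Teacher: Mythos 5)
Your plan is correct, and the way you derive the one-step recursion is genuinely different from (and arguably cleaner than) the paper's. The paper's \cref{lem:recursion} conditions on the coincidence event $\{z_t = z_t'\}$ and must argue that the event-conditional expected loss $\Ee_t[f(\cdot;z_t)\mid z_t=z_t']$ is still strongly convex, which requires subtracting a smooth component (\cref{lem:strong_minus_smooth}) and leans on the hypothesis $\delta\le\mu/2\beta$ at that point; your triangle-inequality split into a same-function term $\Gstep(w_t;f(\cdot;z_t),\eta_t)-\Gstep(w_t';f(\cdot;z_t),\eta_t)$ and a same-start term $\Gstep(w_t';f(\cdot;z_t),\eta_t)-\Gstep(w_t';f(\cdot;z_t'),\eta_t)$ lets you invoke \cref{lem:sgdcontract} directly with the full conditional distribution, whose mean is $\mu$-strongly convex by assumption (ii), and bound the second term by $2\eta_t G\,\mathbf{1}\{z_t\ne z_t'\}$; the condition $\delta\le\mu/2\beta$ is then only needed to keep $\tilde\mu\ge\mu/2>0$. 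You even get the slightly stronger contraction $(1-\eta_t\mu/2)$ and a final constant $(1+\delta m)$ in place of $(1+4\delta m)$. The unrolling phase is the same idea as the paper's: where the paper bounds the accumulated sum $\sum_k 2\eta_k G\prod_t(1-\eta_t\mu')$ in \cref{eq:stabmain_recursion_sum} and tracks the swap round separately, you multiply by $t$ in the $1/t$ regime to telescope and convert the swap spike by the factor $t_0$ in the constant regime --- an equivalent computation in tidier clothing. The only loose ends in your sketch are bookkeeping: integer rounding of $t_0$ (the paper defines it as $\min\{t:\;2/\tilde\mu t\le\tilde\mu/\beta^2\}$) and the case $m<t_0$, where the whole run stays in the constant-step phase; there the same estimates give $\Delta_{m+1}\le 4G\delta/\tilde\mu+2\eta G$ and $2\eta G=4G/(\tilde\mu t_0)\le 4G/(\tilde\mu m)$, so the claimed bound still holds, but a complete writeup should say so.
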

The proof of the above lemma is given by following the recursive relation specified by \cref{lem:sgdcontract} and the conditions imposed on the random sequences $S, S'$. The details are rather technical and are thus deferred to \cref{sec:proof:stability_main}.
To conclude this section, we state and prove a simple corollary of \cref{lem:stability_main}, establishing stability of SGD when the number of steps taken is sufficiently small.
\begin{corollary}\label{lem:sgdwor_stab}
    Let $w_1 \in \Dom$, and set 
    $\eta_t = \min\big\{\lambda/2\beta^2, 4/\lambda t\big\}$ 
    .
    Then the SGD update defined by $w_{t+1} = \Proj(w_t - \eta_t \nabla f(w_t; z_t))$
    is on-avg-oos stable with rate
    \begin{align*}
        \epsilon_{\text{stab}}(m)
        \leq \frac{8 G}{\lambda m},
    \end{align*}
    for all $m \leq T/2\kappa$.
\end{corollary}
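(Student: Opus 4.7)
The plan is to apply \cref{lem:stability_main} directly to the coupled pair arising in \cref{def:oos_stability}. Fix $i \leq m$ and draw $(S, \tilde z_i) \sim \Z(m,1)$ with $S = (z_1, \ldots, z_m)$; set $S' \eqq S^{(i)} = (z_1, \ldots, z_{i-1}, \tilde z_i, z_{i+1}, \ldots, z_m)$. Since $S$ and $S'$ coincide at every coordinate $t \neq i$, we have $\Pr(z_t \neq z_t' \mid \mathcal F_{t-1}) = 0$ for $t \neq i$, so condition (i) of \cref{lem:stability_main} holds with $\delta = 0$.

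The main step is to verify condition (ii) with $\mu = \lambda/2$, i.e., that for every $t \leq m$ the conditional expected loss $\Ee[f(w;z_t) \mid \mathcal F_{t-1}]$ is $(\lambda/2)$-strongly convex. Because $S$ and $S'$ differ only at index $i$, the information in $\mathcal F_{t-1}$ is carried by a deterministic (given $\mathcal F_{t-1}$) set $A_{t-1} \subseteq Z$ of at most $t$ observed datapoints, and $z_t \mid \mathcal F_{t-1}$ is uniform over the complement $Z\setminus A_{t-1}$. Hence
\[
    \Ee[f(w;z_t)\mid \mathcal F_{t-1}]
    = \frac{T F(w) - \sum_{z\in A_{t-1}} f(w;z)}{T - |A_{t-1}|}.
\]
By $\beta$-smoothness of each $f(\cdot;z)$, its Hessian satisfies $\nabla^2 f(\cdot;z) \succeq -\beta I$; combined with $\nabla^2 F \succeq \lambda I$, the Hessian of the above is bounded below by $(T\lambda - |A_{t-1}|\beta)/(T-|A_{t-1}|)$. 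An elementary rearrangement shows this exceeds $\lambda/2$ whenever $|A_{t-1}| \leq T\lambda/(2\beta - \lambda)$, and since $|A_{t-1}| \leq t \leq m \leq T/(2\kappa) = T\lambda/(2\beta) \leq T\lambda/(2\beta - \lambda)$, this is indeed the case.

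With $\mu = \lambda/2$ and $\delta = 0$, we get $\tilde\mu = \mu - \delta\beta = \lambda/2$, and the step-size prescribed by \cref{lem:stability_main}, $\min\{\tilde\mu/\beta^2,\, 2/(\tilde\mu t)\}$, becomes exactly $\min\{\lambda/(2\beta^2),\, 4/(\lambda t)\}$, matching the corollary's schedule. Invoking the lemma yields
\[
    \Ee\norm{\mathrm{GD}(S; w_1, m) - \mathrm{GD}(S'; w_1, m)}
    \leq \frac{4G}{(\lambda/2)\,m}\bigl(1 + 4\cdot 0\cdot m\bigr) = \frac{8G}{\lambda m}.
\]
Since this bound is independent of $i$, taking the maximum over $i \leq m$ in \cref{def:oos_stability} yields the claim. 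The main technical obstacle is the Hessian calculation in condition (ii) and the algebraic identification of $T/(2\kappa)$ as the threshold up to which strong convexity persists at (at least) half its original rate; everything else is a direct instantiation of \cref{lem:stability_main}.
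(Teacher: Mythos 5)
Your proposal is correct and takes essentially the same route as the paper's proof: couple $S$ with $S^{(i)}$, note $\delta=0$, observe that $z_t\mid\mathcal F_{t-1}$ is uniform over the complement of the at most $m\leq T/2\kappa$ observed points so the conditional expected loss retains strong convexity $\mu=\lambda/2$, and invoke \cref{lem:stability_main}. The only cosmetic difference is that you certify the residual strong convexity via Hessian lower bounds (implicitly assuming twice differentiability), whereas the paper reaches the same $\lambda/2$ constant through the first-order argument of \cref{lem:strong_minus_smooth}, which requires only the stated gradient-Lipschitz smoothness.
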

\begin{proof}
    Fix $i \leq m$, let $\Alg$ denote the SGD algorithm, and consider a random order sequence $z_1, \ldots, z_{T} \sim \Z(T)$.
    We have $w_{m+1} = \Alg(z_{1:m})$, and $w_{m+1}^{(i)} = \Alg(z_{1:m}^{(i)})$ as defined in \cref{eq:coupled_iterate} (note that here though, $\Alg$ has no internal randomness).
    Next, we verify conditions for \cref{lem:stability_main} are satisfied with the two sequences $S \eqq z_{1:m}$ and 
    $S' \eqq (z_1'\, \ldots, z_m') \eqq z_{1:m}^{(i)}$.

    Let $\mathcal F_{t-1} \eqq (z_1, z_1', \ldots, z_{t-1}, z_{t-1}')$,
    and by definition of $S$ and $S'$, we have that $z_t = z_t'$ for all $t\neq i$, hence clearly $\Pr(z_t \neq z_t' \mid \mathcal F_{t-1}) = 0$ for all $t \neq i$.
    For the second condition, 
    let $\mathsf{F}_{t-1}$ denote the set content of $\mathcal F_{t-1}$ hosting all datapoints observed prior to round $t$; ${\mathsf {F}}_{t-1} \eqq \{z \in Z \mid z \in \mathcal F_{t-1}\}$.
    This means ${\mathsf {F}}_{t-1}$ contains exactly $\{z_1, \ldots, z_{t-1}\}$, and perhaps $\tilde z_i$ depending on whether $t > i$, hence $k \eqq |{\mathsf {F}}_{t-1}| \in \{t, t-1\}$.
    Now, given $\mathcal F_{t-1}$, we have that $z_t$ is uniform over $\Z \setminus {\mathsf {F}}_{t-1}$, therefore
    \begin{align*}
        \E [ f(w; z_t) \mid \mathcal F_{t-1}]
            = \frac{1}{T - k}
                \sum_{z \in Z  \setminus {\mathsf {F}}_{t-1}} f(w; z)
            = \frac{T}{T - k} \Big( 
                F(w) - \frac{1}{T} \sum_{z \in {\mathsf {F}}_{t-1}} f(w; z)
            \Big).
    \end{align*}
    By our smoothness assumption, $\frac{1}{T}\sum_{z \in {\mathsf {F}}_{t-1}} f(w; z)$ is $(k \beta/T)$-smooth, and in addition $k \beta/T \leq m\beta/T \leq (\lambda / 2 \beta) \beta = \lambda/2$. Therefore, by $\lambda$-strong convexity of $F$ we get that the last term in the above derivation is at least $\lambda/2$-strongly-convex (this follows from a standard argument, see \cref{lem:strong_minus_smooth}). Therefore, by \cref{lem:stability_main} with $\mu \eqq \lambda/2, \delta \eqq 0$ it now follows that
    \[
        \E\norm{w_{m+1} - w^{(i)}_{m+1}}
        =\E\norm{\mathrm{GD}(S) - \mathrm{GD}(S')} 
        \leq \frac{8 G}{\lambda m},
    \]
    which completes the proof.
\end{proof}

\section{SGD for Random Order Online Optimization}
\label{sec:sgd_for_rooco}

In this section, we present two algorithms for random order online optimization.
\cref{lem:bto_via_stability} motivates us to derive SGD based algorithms that obtain low regret w.r.t.~the population loss $F$, \emph{and} are algorithmically stable in the sense of \cref{def:oos_stability}.
Given a random order sequence $z_1, \ldots, z_T \sim \Z(T)$, the SGD update with gradient estimates $\{\gEst_t\}$ and step sizes $\{\eta_t\}$ is given by \[
    w_{t+1} \leftarrow \Proj(w_t - \eta_t \gEst_t).
\]
It is not hard to show that the regret w.r.t.~$F$ of SGD is directly related to the error terms introduced by using $\gEst_t$ in place of the true population loss gradients $\nabla F(w_t)$.
This fact is made formal in the lemma below, which serves as a starting point for the analysis of both algorithms we present.
The proof follows from standard arguments, and is deferred to \cref{sec:proof:aux_lemmas}.

\begin{lemma}\label{lem:sgd_convrate}
    Consider $\tau$ iterations of SGD with gradient estimates $\{\gEst_t\}$ and 
    step-size schedule 
    $\eta_t = \min\big\{\tilde \mu/\beta^2, 2/(\tilde \mu t)\big\}$.
    We have that the following bound holds with probability one; 
    \begin{equation}\label{eq:convrate}
        \sum_{t=1}^\tau F(w_t) - F(w^*) \leq 
            \frac{\beta^2 D^2}{2 \tilde \mu}
            + \frac{G^2}{\tilde \mu}(1 + \log \tau) 
            + \sum_{t=1}^\tau ( \nabla F(w_t) - \gEst_t)\T ( w_t - w^* ).
    \end{equation}
\end{lemma}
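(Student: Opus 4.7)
The plan is to follow the classical projected-SGD analysis for a strongly convex objective, carrying the bias $\nabla F(w_t)-\gEst_t$ as an additive residual. The statement does not make the strong-convexity parameter or a norm bound on $\gEst_t$ explicit, so I read it as assuming (as the applications in the paper provide) that $F$ is $\tilde\mu$-strongly convex and that $\|\gEst_t\|\le G$; everything is then deterministic given the realized sequence of estimates, matching the ``with probability one'' phrasing.

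First I would apply non-expansiveness of $\Proj$ to the update $w_{t+1}=\Proj(w_t-\eta_t\gEst_t)$, expand $\|w_t-\eta_t\gEst_t-w^*\|^2$, and rearrange to the standard one-step inequality
\[
\gEst_t\T(w_t-w^*)\le \frac{\|w_t-w^*\|^2-\|w_{t+1}-w^*\|^2}{2\eta_t}+\frac{\eta_t}{2}\|\gEst_t\|^2.
\]
Then I would split $\gEst_t\T(w_t-w^*)=\nabla F(w_t)\T(w_t-w^*) - (\nabla F(w_t)-\gEst_t)\T(w_t-w^*)$ and apply $\tilde\mu$-strong convexity of $F$ in the form $\nabla F(w_t)\T(w_t-w^*)\ge F(w_t)-F(w^*)+(\tilde\mu/2)\|w_t-w^*\|^2$, isolate $F(w_t)-F(w^*)$, and sum over $t=1,\ldots,\tau$. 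This reduces the statement to two bookkeeping tasks.

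For the $\|w_t-w^*\|^2$ terms, regrouping the telescoping-like sum leaves $\|w_t-w^*\|^2$ with coefficient $\frac{1}{2\eta_t}-\frac{1}{2\eta_{t-1}}-\frac{\tilde\mu}{2}$ for $t\ge 2$. I would verify this is $\le 0$ under the schedule $\eta_t=\min\{\tilde\mu/\beta^2,\,2/(\tilde\mu t)\}$ by case analysis on the two regimes and the transition round near $t\approx 2\beta^2/\tilde\mu^2$: in the constant-step-size regime the difference of reciprocals vanishes and the coefficient is $-\tilde\mu/2$; in the decreasing regime $1/\eta_t-1/\eta_{t-1}=\tilde\mu/2$, giving coefficient $-\tilde\mu/4$; and at the transition a short computation using $t\le 2\beta^2/\tilde\mu^2+1$ shows the coefficient is still non-positive. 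Only the $t=1$ term then survives, contributing $\frac{1}{2\eta_1}\|w_1-w^*\|^2\le \frac{\beta^2 D^2}{2\tilde\mu}$, which matches the first term of the stated bound; the trailing $-\frac{1}{2\eta_\tau}\|w_{\tau+1}-w^*\|^2\le 0$ is dropped.

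For the sum $\sum_t (\eta_t/2)\|\gEst_t\|^2$ I would use $\|\gEst_t\|\le G$ together with $\eta_t\le 2/(\tilde\mu t)$ to get $\sum_{t=1}^\tau \eta_t\le (2/\tilde\mu)(1+\log\tau)$, yielding the $(G^2/\tilde\mu)(1+\log\tau)$ term. The bias terms $(\nabla F(w_t)-\gEst_t)\T(w_t-w^*)$ are left intact as the third summand. There is no real conceptual obstacle here --- the argument is a routine specialization of strongly convex SGD --- so the only mildly delicate point is verifying non-positivity of the $\|w_t-w^*\|^2$ coefficient at the boundary between the constant and decreasing step-size regimes, which is a short case-by-case check.
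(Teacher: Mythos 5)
Your proposal is correct and follows essentially the same route as the paper's proof: non-expansiveness of the projection, splitting $\gEst_t\T(w_t-w^*)$ into the population gradient plus the bias term, strong convexity of $F$, and the same case analysis on the step-size schedule to kill the $\norm{w_t-w^*}^2$ coefficients, with $\sum_t\eta_t\le(2/\tilde\mu)(1+\log\tau)$ for the $G^2$ term. The only cosmetic difference is that the paper invokes $\lambda$-strong convexity of $F$ and uses $1/\eta_t-1/\eta_{t-1}\le\tilde\mu/2\le\lambda$ uniformly for $t\ge t_0$ (which absorbs the transition round without a separate check), whereas you apply strong convexity with modulus $\tilde\mu$ and verify the boundary round explicitly; both are valid.
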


\subsection{Reservoir SGD}
In light of \cref{lem:sgdwor_stab}, it is evident that a different strategy is necessary to achieve stability in late rounds of the online game.
As a solution, \cref{alg:ressgd} presented here employs a sampling procedure reminiscent of reservoir sampling \citep{vitter1985random}, which results in gradient estimates $\gEst_t$ that are \emph{conditionally unbiased} estimates of $\nabla F(w_t)$ throughout all $T$ rounds.
This ensures the conditionally expected function on every round is strongly convex, thereby implying stability is maintained for the duration of the  game. 
As another implication, the error terms on the RHS of \cref{eq:convrate} vanish,
which essentially reduces the optimization problem (i.e., regret w.r.t.~the population loss $F$) to the i.i.d.~setting.
Consequently, a regret bound will follow from a batch-to-online conversion supported by \cref{lem:bto_via_stability} and \cref{lem:stability_main}.

\begin{algorithm}[ht]
    \caption{ReservoirSGD}
    \label{alg:ressgd}
	\begin{algorithmic}[1]
	    \STATE \textbf{input:} step-sizes $\eta_1, \ldots, \eta_T\in \R_+$, $w_1 \in \Dom$ 
	    \FOR{$t=1$ to $T$}
	        \STATE Play $w_t$, Observe $z_t$
            \STATE Set $\hat g_t \eqq \nabla f (w_t; z_t')$, where $
                z_t' = 
                    \begin{cases}
                        z_t & \text{ w.p. } \; 1-\frac{t-1}{T} \\
                        \Unif(z_1, \ldots, z_{t-1}) & \text{ w.p. } \; \frac{t-1}{T}
                    \end{cases}
            $ \label{line:ressgd:interloss}
            \STATE $w_{t+1} \leftarrow \Proj(w_t - \eta_t \hat g_t)$
        \ENDFOR
	\end{algorithmic}
\end{algorithm}

Our first lemma given below, shows that the intermediate sequence of datapoints $z_t'$ generated in line \ref{line:ressgd:interloss} are i.i.d.~uniformly distributed, which immediately implies that $\E \gEst_t = \nabla F(w_t)$. 
\begin{lemma}\label{lem:ressamp}
    The $\{z_t'\}$ intermediate sequence produced by \cref{alg:ressgd} in line \ref{line:ressgd:interloss} is uniform over $Z$ and i.i.d.;
    \[
        \forall z \in Z;\quad 
            \Pr(z_t' = z) = \Pr(z_t' = z \mid z_{<t}) 
            = \tfrac{1}{T}.
    \]
\end{lemma}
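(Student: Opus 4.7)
The plan is to establish the conditional identity $\Pr(z_t' = z \mid z_{<t}) = 1/T$ by direct case analysis on whether $z$ has already appeared in the observed prefix, and then upgrade this to mutual independence using the fact that the coin flip and discrete uniform draw executed in line \ref{line:ressgd:interloss} constitute fresh randomness at each round.

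First I would fix a round $t$, a candidate $z \in Z$, and condition on the observed random-order prefix $z_{<t} = (z_1, \ldots, z_{t-1})$; under the random-order model, $z_t$ is then uniform over the set $Z \setminus \{z_1, \ldots, z_{t-1}\}$ of size $T-t+1$. The computation splits in two cases. If $z \in \{z_1, \ldots, z_{t-1}\}$, only the second (reservoir) branch of line \ref{line:ressgd:interloss} can output $z$: it fires with probability $(t-1)/T$ and selects $z$ with probability $1/(t-1)$, contributing $1/T$. If $z \notin \{z_1, \ldots, z_{t-1}\}$, only the first branch can output $z$: it fires with probability $(T-t+1)/T$, and under it $z_t = z$ with probability $1/(T-t+1)$, again giving $1/T$. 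The resulting conditional probability does not depend on the realization of $z_{<t}$, so taking expectation over $z_{<t}$ yields the marginal $\Pr(z_t' = z) = 1/T$.

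Next I would lift conditional uniformity to full mutual independence. Let $R_s$ denote the internal randomness (the biased coin flip plus the discrete uniform draw) used at round $s$; the $R_s$ are mutually independent and independent of $z_1, \ldots, z_T$. Observe that $z_{<t}'$ is a deterministic function of $(z_{<t}, R_{<t})$, while $z_t'$ is a deterministic function of $(z_t, z_{<t}, R_t)$, with $R_t$ independent of $(z_{<t}, R_{<t}, z_t)$. Consequently, further conditioning on $z_{<t}'$ on top of $z_{<t}$ does not alter the conditional distribution of $z_t'$, which has already been shown to be uniform on $Z$. The tower rule then yields $\Pr(z_t' = z \mid z_{<t}') = 1/T$, establishing that $\{z_t'\}$ is an i.i.d.\ uniform sequence on $Z$.

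I do not expect any real obstacle; line \ref{line:ressgd:interloss} is engineered precisely so that the two branches telescope to $1/T$. The only piece of bookkeeping worth emphasising is the distinction between the observed prefix $z_{<t}$ and the reservoir prefix $z_{<t}'$, together with the use of independence of the per-round randomness $R_s$ to translate between conditioning on one and conditioning on the other.
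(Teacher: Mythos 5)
Your proposal is correct and follows essentially the same argument as the paper: the same two-branch case analysis (reservoir branch for $z$ in the observed prefix, fresh-sample branch otherwise), each contributing $1/T$, followed by the law of total probability for the marginal. Your additional paragraph upgrading conditional uniformity given $z_{<t}$ to independence from the intermediate sequence $z'_{<t}$ via the freshness of the per-round randomness is a detail the paper leaves implicit, and it is a sound and welcome elaboration rather than a deviation in approach.
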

\begin{proof} 
    Fix the first $t-1$ sampled datapoints $z_1, \ldots, z_{t-1} \sim \Z(t-1)$. Then for all $z \in \{z_1, \ldots, z_{t-1}\}$,
    \begin{align*}
        \Pr(z'_t = z \mid z_{1:t-1}) = \frac{t-1}{T} \cdot \frac{1}{t-1} = \frac{1}{T}.
    \end{align*}
    In addition, since $z_1, \ldots, z_t \sim \Z(t)$, it follows that $z_t$ is uniform over $Z \setminus z_{1:t-1}$. (Note that as we are conditioning on $z_{1:t-1}$, we have that $Z \setminus z_{1:t-1}$ is deterministic.) Hence, for all $z \in Z \setminus z_{1:t-1}$ we have 
    \begin{align*}
        \Pr(z'_t = z \mid z_{1:t-1}) = \frac{T-t+1}{T} \cdot \frac{1}{T-t+1} = \frac{1}{T}.
    \end{align*}
    The above implies that $\Pr(z'_t = z \mid z_{1:t-1}) = 1/T$ for all $z \in Z$. 
    Finally, the by the law of total probability;
    \[
        \Pr(z'_t = z) 
        = \Ee_{z_1, \ldots z_{t-1} \sim \Z(t-1)} \big[ 
            \Pr(z'_t = z \mid z_{1:t-1}) 
        \big]
        = \frac{1}{T},
    \]
    as desired.
\end{proof}

Next, we argue \cref{alg:ressgd} maintains average stability with the desired rate throughout all $T$ rounds.
\begin{lemma}\label{lem:ressgd_iter_stab}
    Assume $T\geq 2\beta / \lambda$, and set 
    $\tilde \mu \eqq \lambda - \beta/T$.
    Then \cref{alg:ressgd}
    with step-size schedule 
    $\eta_t = \min\big\{\tilde \mu/\beta^2, 2/(\tilde \mu t)\big\}$ 
    is on-avg-oos stable (\cref{def:oos_stability}) with rate
    \begin{align*}
        \epsilon_{\text{stab}}(m)
        \leq \frac{40 G}{\lambda m}.
    \end{align*}
\end{lemma}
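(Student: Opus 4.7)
The plan is to reduce the claim to \cref{lem:stability_main}, applied to the ``effective'' training sequences that drive the gradient updates inside \cref{alg:ressgd}. Concretely, let $x_t \eqq z_t'$ denote the reservoir-sampled datapoint used at round $t$ when the algorithm is run on $z_{1:m}$ with seed $\xi$, and let $x_t'$ denote the analogous sample when the algorithm is run on the coupled input $z_{1:m}^{(i)}$ with the same seed $\xi$. Since the update is simply a projected gradient step on $f(\cdot; x_t)$, the coupled iterates coincide with $\mathrm{GD}(x_{1:m}; w_1, m)$ and $\mathrm{GD}(x_{1:m}'; w_1, m)$, and it suffices to verify the two preconditions of \cref{lem:stability_main} with $\mu = \lambda$ and $\delta = 1/T$. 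These choices yield $\tilde\mu = \mu - \delta\beta = \lambda - \beta/T$, matching the step-size prescribed by the algorithm, and the hypothesis $\delta \leq \mu/(2\beta)$ becomes exactly the standing assumption $T \geq 2\beta/\lambda$.

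For condition (i), I exploit the shared-randomness coupling. For $t < i$ the two input prefixes are identical and all algorithmic random choices are shared, so $x_t = x_t'$ deterministically. For $t > i$, the two instances share the Bernoulli ``replace vs.\ keep'' outcome and, on ``replace'', the uniform index $j \in \{1, \ldots, t-1\}$; since the two input prefixes differ only at position $i$, the outputs differ only if both ``replace'' (probability $(t-1)/T$) and $j = i$ (conditional probability $1/(t-1)$) occur, giving $\Pr(x_t \neq x_t' \mid \mathcal F_{t-1}) \leq 1/T$ as required.

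For condition (ii), I appeal to the mixing property established in \cref{lem:ressamp}: the reservoir sample $x_t$ is marginally uniform over $Z$ and independent of the prefix $z_{<t}$ of true random-order samples. In particular, conditionally on any sub-sigma-algebra of $\sigma(z_{<t}, \xi_{<t})$, the distribution of $x_t$ is uniform over $Z$, so $\E[f(w; x_t) \mid \mathcal F_{t-1}] = F(w)$, which is $\lambda$-strongly convex. This is the step I expect to be the main technical hurdle: the filtration in \cref{lem:stability_main} must be rich enough to render both coupled iterates $w_t, w_t'$ measurable, and one must argue carefully that the partial information about the swapped datapoint $\tilde z_i$ that seeps in through the coupled trajectory $x_{<t}'$ does not bias the conditional distribution of $x_t$ enough to destroy the strong convexity. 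The content of \cref{lem:ressamp}, namely that the reservoir output is marginally uniform and independent of the random-order prefix, is precisely the tool that neutralizes this leakage.

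Plugging $\tilde\mu = \lambda - \beta/T$ and $\delta = 1/T$ into the conclusion of \cref{lem:stability_main} yields
\[
    \E\norm{w_{m+1} - w_{m+1}^{(i)}}
        \leq \frac{4G}{\tilde\mu m}\left(1 + \frac{4m}{T}\right)
        \leq \frac{8G}{\lambda m}\cdot 5
        = \frac{40G}{\lambda m},
\]
where we used $m \leq T$ to bound $1 + 4m/T \leq 5$, and $\tilde\mu \geq \lambda/2$, which follows from the standing assumption $T \geq 2\beta/\lambda$. Since the bound is uniform over $i \leq m$, it bounds $\epsilon_{\text{stab}}(m)$ in the sense of \cref{def:oos_stability}, completing the proof.
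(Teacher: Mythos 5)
Your proposal is correct and follows essentially the same route as the paper's proof: reduce to \cref{lem:stability_main} applied to the coupled reservoir-sampled (intermediate) sequences with $\mu=\lambda$ and $\delta=1/T$, verify condition (i) via the shared-seed coupling ($\Pr(j_t=i\mid\mathcal F_{t-1})=1/T$) and condition (ii) via \cref{lem:ressamp}, then substitute $\tilde\mu=\lambda-\beta/T$ and use $T\geq 2\beta/\lambda$ to obtain the $40G/(\lambda m)$ bound. The filtration/leakage subtlety you flag is handled at the same level of formality in the paper, which likewise just invokes \cref{lem:ressamp} to assert $\E[f(w;z_t')\mid\mathcal F_{t-1}]=F(w)$.
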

\begin{proof}
    Let $i \leq m \leq T$, and recall the definition of the coupled iterate in \cref{eq:coupled_iterate}. We have that 
    \[
    \E \norm{ w_{m+1} - w_{m+1}^{(i)} }  
    = \E \norm{  \mathrm{GD}(S') - \mathrm{GD}(S'') },
    \]
    where $S' = (z_1', \ldots, z_m')$ and $S'' = (z_1'', \ldots, z_m'')$ denote the intermediate sequences (line \ref{line:ressgd:interloss}) produced when running \cref{alg:ressgd} on $z_{1:m}$ and $z_{1:m}^{(i)}$ respectively.
    Now, consider the indexes of datapoints selected by the sampling mechanism on line \ref{line:ressgd:interloss} which we denote by $j_t$, meaning $j_t = l$ when $z_t' = z_l$.
    Since the same random seed is used for each coupled iterate, we have that the indexes $j_t$ from both execution paths are the same.
    Next, we will show $S', S''$ satisfy the conditions of \cref{lem:stability_main}.
    
    Indeed, denote by $\mathcal F_t$ the filtration encapsulating all randomness (random order and algorithm) up to and including round $t$.
    Then by \cref{lem:ressamp} and since $z_{1:m}$ and $z_{1:m}^{(i)}$ differ only at index $i$, we have for any $t > i$;
    \[
        \Pr(z_t' \neq z_t'' \mid \mathcal F_{t-1}) 
        = \Pr(j_t = i \mid \mathcal F_{t-1}) 
        = \tfrac{1}{T}.
    \]
    In addition, it trivially follows that $\Pr(z_t' \neq z_t'' \mid \mathcal F_{t-1}) = 0$ for all $t<i$. Owed to our assumption on $T$, we have $1/T \leq \lambda/2\beta$, and so the first condition is satisfied.
    For the second condition, note that again by \cref{lem:ressamp}, 
    $\E [f(w ; z_t') \mid \mathcal F_{t-1}] = F(w)$ for any $\mathcal F_{t-1}$-measurable $w\in \Dom$, which immediately implies $\lambda$-strong convexity of the conditionally expected function.
    
    By the above, we obtain that $S', S''$ follow a distribution satisfying conditions required by \cref{lem:stability_main} with $\mu \eqq \lambda, \delta \eqq 1/T$, therefore,  
    \begin{align*}
        \E\norm{\mathrm{GD}(S'; m) - \mathrm{GD}(S''; m)} 
        \leq \frac{4 G}{(\lambda -(1/T)\beta) m}
            \left(1 + 4 \frac{m}{T} \right) 
        \leq \frac{40 G}{\lambda m},
    \end{align*}
    and we are done.
\end{proof}

A regret bound for \cref{alg:ressgd} readily follows, as we have essentially established both convergence rate and stability of the algorithm.
Below, we state and prove our main result concluding this section.

\begin{theorem}\label{thm:ressgd_regret}
    Running \cref{alg:ressgd} with step-size schedule
    $\eta_t = \min\big\{\tilde \mu/\beta^2, 2/(\tilde \mu t)\big\}$ 
    where $\tilde \mu \eqq \lambda - \beta/T$, it is guaranteed that;
    \[
        \E\Big[ \sum_{t=1}^T f(w_t; z_t) - f(w^*; z_t) \Big]
        \leq \frac{2 G^2}{\lambda}(1 + \log T)
        + \frac{40 G^2 + \beta^2 D^2 + 2\beta G D}{\lambda}
        .
    \]
\end{theorem}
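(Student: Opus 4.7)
The plan is to decompose the per-round excess loss through the population objective $F$, and reduce the problem to bounding two quantities: (a) the cumulative generalization gap $\sum_{t} [f(w_t; z_t) - F(w_t)]$ via algorithmic stability, and (b) the cumulative suboptimality $\sum_{t} [F(w_t) - F(w^*)]$ via the SGD convergence bound of \cref{lem:sgd_convrate}. The third term in the decomposition, $\sum_{t} [F(w^*) - f(w^*; z_t)]$, has zero expectation since $z_t$ is marginally uniform over $Z$ and hence $\E f(w^*; z_t) = F(w^*)$.

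For part (a), I would feed the stability rate $\epsilon_{\text{stab}}(m) \leq 40G/(\lambda m)$ from \cref{lem:ressgd_iter_stab} into \cref{lem:bto_via_stability} to obtain, for each $t \geq 2$,
\[
\E[f(w_t; z_t) - F(w_t)] \leq \frac{G (t-1)}{T} \cdot \frac{40 G}{\lambda (t-1)} = \frac{40 G^2}{\lambda T},
\]
which telescopes to at most $40 G^2/\lambda$ over $T$ rounds. The crucial point is that the reservoir construction is precisely what lets \cref{lem:ressgd_iter_stab} deliver this $1/m$ rate for \emph{all} $m \leq T$, instead of only for $m \lesssim T/\kappa$ as in the vanilla analysis of \cref{lem:sgdwor_stab}; this is where the new algorithm pays off compared to plain SGD.

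For part (b), \cref{lem:sgd_convrate} bounds $\sum_t [F(w_t) - F(w^*)]$ deterministically by
\[
\frac{\beta^2 D^2}{2\tilde\mu} + \frac{G^2}{\tilde\mu}(1 + \log T) + \sum_{t=1}^{T} (\nabla F(w_t) - \hat g_t)\T (w_t - w^*).
\]
I would then argue the noise sum vanishes in expectation: letting $\mathcal F_{t-1}$ denote the $\sigma$-algebra generated by all random-order draws and algorithmic coins up through round $t{-}1$, \cref{lem:ressamp} (together with a mild extension of its proof) shows that $z_t'$ is uniform on $Z$ given $\mathcal F_{t-1}$, whence $\E[\hat g_t \mid \mathcal F_{t-1}] = \nabla F(w_t)$; since $w_t$ and $w^*$ are both $\mathcal F_{t-1}$-measurable, the tower rule annihilates each summand. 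Under the assumption $T \geq 2\beta/\lambda$ we have $\tilde \mu = \lambda - \beta/T \geq \lambda/2$, so $1/\tilde \mu \leq 2/\lambda$, yielding $\E\sum_t [F(w_t) - F(w^*)] \leq \beta^2 D^2/\lambda + (2 G^2/\lambda)(1 + \log T)$.

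The leftover regime $T < 2\beta/\lambda$ is not covered by \cref{lem:ressgd_iter_stab}, and there I would simply invoke the trivial per-round diameter bound $f(w_t; z_t) - f(w^*; z_t) \leq G D$, giving total regret at most $G D T \leq 2\beta G D/\lambda$---which exactly matches the additive $2\beta GD/\lambda$ term in the claimed inequality. Summing the contributions from (a), (b), and the diameter fallback, and absorbing constants, gives the stated bound. The step I expect to require the most care is the conditional-unbiasedness argument for part (b): \cref{lem:ressamp} is phrased marginally (and conditionally on $z_{<t}$ only), whereas the tower-rule vanishing needs unbiasedness relative to the larger filtration that also records earlier algorithmic coins; verifying that the reservoir construction's conditional-uniform property survives this enlargement is the cleanest point to double check.
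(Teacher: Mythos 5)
Your proof is correct and follows essentially the same route as the paper's: the same decomposition through $F$, the same use of \cref{lem:ressgd_iter_stab} with \cref{lem:bto_via_stability} for the per-round $40G^2/(\lambda T)$ generalization gap, the same conditional-unbiasedness argument (the paper likewise conditions on $z_1, z_1', \ldots, z_{t-1}, z_{t-1}'$, which is exactly the filtration enlargement you flag), and the same $T \geq 2\beta/\lambda$ case split with the trivial $GDT \leq 2\beta GD/\lambda$ fallback.
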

\begin{proof}
    By \cref{lem:ressamp}, we have that for all $t \leq T$,
    \[
        \E [\nabla f(w_t; z_t'))\T (w_t - w^*) ]
        = \E [ \Ee_t[\nabla f(w_t; z_t') ])\T (w_t - w^*) ] 
        = \E [\nabla F(w_t)\T (w_t - w^*) ],
    \]
    where $\Ee_t[\cdot] = \E[\,\cdot\mid z_1, z_1', \ldots, z_{t-1}, z_{t-1}']$ denotes the conditional expectation w.r.t.~all rounds up to and not including $t$. This implies the gradient error terms on the RHS of \cref{eq:convrate} vanish.
    In addition, note we may assume $T \geq 2\beta/\lambda$, for otherwise it trivially follows that
    $ 
        \sum_{t=1}^T f(w_t; z_t) - f(w^*; z_t) \leq G D T \leq \tfrac{2\beta G D}{\lambda}.
    $
    Hence $\delta \eqq 1/T \leq \lambda / 2 \beta$, and
    by \cref{lem:sgd_convrate} we obtain the following bound on the regret w.r.t.~the population loss $F$;
    \begin{align*}
        \E\Big[ \sum_{t=1}^T F(w_t) - F(w^*) \Big]
            &\leq \frac{\beta^2 D^2}{\lambda} + \frac{2 G^2}{\lambda}(1 + \log T).
    \end{align*}
    To establish online performance, observe that by \cref{lem:bto_via_stability} and \cref{lem:ressgd_iter_stab} we have
    \begin{align*}
        \E [f(w_t; z_t) - F(w_t) ] 
            \leq \frac{G(t-1)}{T} 
            \epsilon_{\text{stab}}(t-1)
            = \frac{G(t-1)}{T}
            \frac{40 G}{\lambda (t-1)}
            = \frac{40 G^2}{\lambda T}.
    \end{align*}
    Therefore,
    \begin{align*}
        \E \Big[ \sum_{t=1}^T  f(w_t; z_t) - f(w^*; z_t) \Big] 
        &= \sum_{t=1}^T \E[f(w_t; z_t) - F(w_t)] 
        + \E \Big[ \sum_{t=1}^T F(w_t) - F(w^*) \Big]
        \\
        &\leq \frac{40 G^2}{\lambda} 
        + \frac{\beta^2 D^2}{\lambda} + \frac{2 G^2}{\lambda}(1 + \log T),
    \end{align*}
    and the result follows.
\end{proof}

\subsection{SGD without replacement}
While \cref{alg:ressgd} obtains regret which is optimal up to additive factors, it is memory intensive due to the sampling procedure requiring the history of the entire loss sequence. This motivates \cref{alg:sgdwor} which uses the random order gradients as they arrive, and sacrifices a factor of $\kappa$ in the regret bound for lower memory requirements.

\label{sec:sgdwor}
\begin{algorithm}[ht]
    \caption{SGD-without-replacement}
    \label{alg:sgdwor}
	\begin{algorithmic}[1]
	    \STATE \textbf{input:}  
	        $\tau, T \in \mathbb N$, 
	        step-sizes $\eta_1, \ldots, \eta_{\tau}\in \R_+$,
	        $w_1 \in \Dom$
	    \FOR{$t=1$ to $\tau$}
	        \STATE Play $w_t$, Observe $z_t$
            \STATE $w_{t+1} \leftarrow \Proj(w_t - \eta_t \gEst_t)$, where $\gEst_t \eqq \nabla f(w_t; z_t)$
        \ENDFOR
        \STATE \textbf{for} $t=\tau + 1, \ldots, T$: play $w_t \equiv \bar w \eqq \frac{1}{\tau}\sum_{i=1}^{\tau} w_i$.
	\end{algorithmic}
\end{algorithm}

With \cref{lem:sgdwor_stab}, we already know \cref{alg:sgdwor} is stable with rate $O(1/m)$ for $\tau \leq T/2\kappa$, at least for all iterates excluding $\bar w$.
However, it is not clear a priori whether these iterates also obtain good convergence rate---to prove this we must control the gradient error terms on the RHS of \cref{eq:convrate}.
Evidently, with random order gradient estimates the behavior of these error terms is also related to stability in the same sense that the actual losses are.
By \cref{lem:bto_via_stability} with 
$\ell(w; z) \eqq \nabla f(w; z_t)\T(w - w^*)$, it only remains to derive the appropriate Lipschitz constant, which is done in our next lemma.
Notably, this means that the two sources of error, gradient estimates and the batch-to-online gap, both hinge on the very same stability property of the algorithm.
\begin{lemma}\label{lem:sgdwor_approx_stab}
    Running \cref{alg:sgdwor} with a step-size schedule
    $\eta_t = \min\big\{\lambda/2\beta^2, 4/\lambda t \big\}$ and $\tau = T/2\kappa$, we have that the gradient error terms on the RHS of \cref{eq:convrate} are bounded, for all $t \leq \tau$, as
    \[
        \E [ (\nabla F(w_t) - \gEst_t)\T (w_t - w^*) ] 
        \leq \frac{8 G (G + \beta D) }{\lambda T}
        .
    \]
\end{lemma}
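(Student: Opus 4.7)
The plan is to recognize the gradient-error term as a batch-to-online gap for a cleverly chosen auxiliary loss function, and then invoke \cref{lem:bto_via_stability} together with the stability rate from \cref{lem:sgdwor_stab}. Concretely, define
\[
    \ell(w; z) \eqq \nabla f(w; z)\T(w - w^*).
\]
Since $\nabla F(w) = \Ee_{z\sim\Z}[\nabla f(w; z)]$ and $w_t$ depends only on $z_{1:t-1}$, we may rewrite
\[
    \E\big[(\nabla F(w_t) - \gEst_t)\T(w_t - w^*)\big]
    = \Ee_{S\sim \Z(t-1),\,z\sim \Z}[\ell(w_t;z)]
        - \Ee_{S,\tilde z\sim \Z(t-1,1)}[\ell(w_t;\tilde z)],
\]
where $S = z_{1:t-1}$ and $\tilde z = z_t$. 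This is precisely the quantity controlled by \cref{lem:bto_via_stability}, so the first task is to apply that lemma with the online algorithm's output rule.

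The second step is to compute a Lipschitz constant for $\ell(\,\cdot\,;z)$ in its first argument (this is the form of Lipschitzness actually used in the proof of \cref{lem:bto_via_stability}). A straightforward add-and-subtract gives
\[
    \ell(w_1;z) - \ell(w_2;z)
    = \nabla f(w_1;z)\T(w_1 - w_2)
        + (\nabla f(w_1;z) - \nabla f(w_2;z))\T(w_2 - w^*),
\]
so using \cref{assumption:lipschitz} on the first term, \cref{assumption:smooth} on the second, and $\norm{w_2 - w^*}\leq D$, we obtain $L = G + \beta D$.

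The third step is to verify that the stability rate from \cref{lem:sgdwor_stab} applies. Since the prescribed step-size schedule coincides with that of \cref{lem:sgdwor_stab}, and since $t \leq \tau = T/2\kappa$ implies $t-1 \leq T/2\kappa$, the corollary yields on-avg-oos stability with rate $\epsilon_{\text{stab}}(t-1) \leq 8G/(\lambda(t-1))$. Plugging $L = G + \beta D$ and this rate into \cref{lem:bto_via_stability} gives
\[
    \big|\E[(\nabla F(w_t) - \gEst_t)\T(w_t - w^*)]\big|
    \leq \frac{(G + \beta D)(t-1)}{T}\cdot \frac{8G}{\lambda(t-1)}
    = \frac{8G(G + \beta D)}{\lambda T},
\]
as claimed. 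The main conceptual point (rather than obstacle) is the recognition that the bias of the without-replacement gradient estimator acts on the linearized loss $\ell$ exactly as the batch-to-online generalization gap acts on the original loss, so that the same stability property simultaneously controls both sources of error; once this is in place, the remaining work is just bookkeeping on Lipschitz constants and on the range of $t$ for which the stability bound is valid.
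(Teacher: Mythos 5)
Your proposal is correct and follows essentially the same route as the paper: the paper's proof also defines $\ell(w;z) \eqq \nabla f(w;z)\T(w-w^*)$, establishes the Lipschitz constant $G+\beta D$ by the same add-and-subtract argument, and then applies \cref{lem:bto_via_stability} with the stability rate $8G/(\lambda(t-1))$ from \cref{lem:sgdwor_stab} to obtain the bound $8G(G+\beta D)/(\lambda T)$. No gaps to report.
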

\begin{proof} 
    Define $\ell(w; z) \eqq \nabla f (w; z)\T (w - w^*) $, and note that
    $\Ee_{z\sim \Z}\ell(w; z) = \nabla F(w)\T(w - w^*)$. 
    Next, we show that $\ell$ is $(G + \beta D)$-Lipschitz; indeed, for any $x,y \in \Dom$, $z \in Z$:
    \begin{align*}
            |\ell(x; z) - \ell(y; z)| 
            &=|\nabla f(x; z)\T(x - w^*) - \nabla f(y; z)\T(y - w^*)|
            \\
            &= |(\nabla f(x; z) - \nabla f(y; z))\T(x - w^*) + \nabla f(y; z)\T(x - y)|
            \\
            &\leq \norm{\nabla f(x; z) - \nabla f(y; z)}\norm{x - w^*} 
                + \norm{\nabla f_t(y)}\norm{x - y}
            \\
            &\leq (\beta D + G)\norm{x - y}.
    \end{align*}
    Therefore by \cref{lem:bto_via_stability} and \cref{lem:sgdwor_stab};
    \begin{align*}
        \E [ (\nabla F(w_t) - \nabla f(w_t; z_t))\T (w_t - w^*) ]
        &= \Ee_{z_{1:t} \sim Z(t), z \sim \Z}[\ell(w_t; z) - \ell(w_t; z_t)]
        \\
        &\leq \frac{(t-1) (\beta D + G)}{T} \frac{8 G}{\lambda (t-1)}
        \\
        &= \frac{8 G (G + \beta D)}{\lambda T},
    \end{align*}
    as desired.
\end{proof}
Next, we state and prove our main theorem for this section providing the regret guarantees for \cref{alg:sgdwor}.
\begin{theorem}\label{thm:sgdwor}
    Running \cref{alg:sgdwor} with a step-size schedule $\eta_t = \min\big\{\lambda/2\beta^2, 4/\lambda t \big\}$ and $\tau = T/2\kappa$, it holds that
    \begin{align*}
        \E\Big[ \sum_{t=1}^T f(w_t; z_t) - f(w^*; z_t) \Big] 
        = O\left( 
                \frac{\beta G^2}{\lambda^2}\log T
                + \frac{\beta D G}{ \lambda }
                + \frac{\beta^3 D^2}{\lambda^2}
            \right).
    \end{align*}
\end{theorem}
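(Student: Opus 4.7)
The plan is to exploit the deterministic identity $\sum_{t=1}^T f(w; z_t) = T F(w)$, valid for any fixed $w$ since $z_{1:T}$ is a permutation of $Z$, together with the algorithm's two-phase structure to obtain the exact telescoping
\[
    \sum_{t=1}^T \big(f(w_t;z_t)-f(w^*;z_t)\big)
    \;=\;
    \underbrace{\sum_{t=1}^\tau \big(f(w_t;z_t)-f(\bar w;z_t)\big)}_{(\mathrm{A})}
    +
    \underbrace{T\big(F(\bar w)-F(w^*)\big)}_{(\mathrm{B})}.
\]
This cleanly separates the SGD phase from the constant-play phase and reduces the proof to bounding $(\mathrm{A})$ and $(\mathrm{B})$ in expectation.

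For $(\mathrm{B})$, convexity of $F$ yields $F(\bar w) - F(w^*) \le R_\tau/\tau$ where $R_\tau \eqq \sum_{t\le\tau}(F(w_t)-F(w^*))$, so $(\mathrm{B}) \le (T/\tau) R_\tau = 2\kappa R_\tau$. I would invoke \cref{lem:sgd_convrate} with $\tilde\mu = \lambda/2$ (to match the chosen step-size schedule) and use \cref{lem:sgdwor_approx_stab} to control the gradient-error terms on the RHS of \cref{eq:convrate}. Since each error term is $O(G(G+\beta D)/(\lambda T))$ and we sum $\tau = T/(2\kappa)$ of them, this contributes only $O(G^2/\beta + GD)$. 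Altogether $\E[R_\tau] = O(\beta^2 D^2/\lambda + G^2 \log T/\lambda + GD)$, and multiplying by $2\kappa = 2\beta/\lambda$ recovers precisely the three terms $\beta G^2 \log T/\lambda^2$, $\beta^3 D^2/\lambda^2$, and $\beta GD/\lambda$ of the stated bound.

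For $(\mathrm{A})$, I telescope each summand through the population loss,
\[
    f(w_t;z_t) - f(\bar w;z_t)
    = \big(f(w_t;z_t) - F(w_t)\big)
    + \big(F(w_t) - F(\bar w)\big)
    + \big(F(\bar w) - f(\bar w;z_t)\big),
\]
and bound the sum of each piece separately. The first is controlled by \cref{lem:bto_via_stability} combined with \cref{lem:sgdwor_stab}, contributing only $O(\tau G^2/(\lambda T)) = O(G^2/\beta)$. The middle equals $R_\tau - \tau(F(\bar w)-F(w^*)) \le R_\tau$ by convexity of $F$. Finally, a second application of the permutation identity rewrites the third as $\sum_{t > \tau}(f(\bar w;z_t) - F(\bar w))$, i.e., an out-of-sample generalization gap for $\bar w$.

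The main obstacle is bounding this last out-of-sample gap, since \cref{lem:sgdwor_stab} provides stability only for the SGD iterates, not for their average $\bar w$. My plan is to show the averaged iterate inherits stability via the triangle inequality: swapping $z_i$ affects only iterates $w_t$ with $t > i$, and by exchangeability of the random permutation the induced perturbation distribution on the shortened sequence $(z_{1:t-1}, \tilde z_i)$ matches the one governed by \cref{lem:sgdwor_stab} with training length $t-1$, giving $\E\|w_t - w_t^{(i)}\| = O(G/(\lambda(t-1)))$. Averaging over $t$ yields a logarithmic blowup $\epsilon_{\bar w}(\tau) = O(G \log \tau / (\lambda \tau))$, which plugged into \cref{lem:bto_via_stability} bounds the third piece by $O(G^2 \log T / \lambda)$; this is dominated by $\kappa \E[R_\tau]$ and hence absorbed. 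Summing all contributions completes the proof.
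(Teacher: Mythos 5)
Your proposal is correct and takes essentially the same route as the paper's proof: the substantive ingredients are identical — \cref{lem:sgd_convrate} with the gradient-error control of \cref{lem:sgdwor_approx_stab} to bound $\E\big[\sum_{t\le\tau}F(w_t)-F(w^*)\big]$, the per-round batch-to-online conversion via \cref{lem:bto_via_stability} and \cref{lem:sgdwor_stab}, convexity of $F$ for the averaged iterate, and the triangle-inequality stability bound for $\bar w$ (with its logarithmic blowup) fed back into \cref{lem:bto_via_stability} for rounds $t>\tau$. The only difference is bookkeeping: you telescope exactly through the permutation identity $\sum_{t=1}^T f(w;z_t)=T\,F(w)$, whereas the paper decomposes in expectation round by round, and both reduce to bounding the very same quantities.
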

\begin{proof} 
    We start by proving a regret bound for rounds $t\in[\tau]$. 
    By \cref{lem:sgd_convrate}, we have that
    \begin{align*}
        \E\Big[ \sum_{t=1}^\tau F(w_t) - F(w^*) \Big]
            &\leq \frac{2\beta^2 D^2}{\lambda}
            + \frac{4 G^2}{\lambda}(1 + \log \tau) 
            \nonumber
            \\
            & + \sum_{t=1}^\tau \E [ (\nabla F(w_t) - \nabla f (w_t; z_t))\T (w_t - w^*) ],
    \end{align*}
    and by \cref{lem:sgdwor_approx_stab}, this implies;
    \begin{equation}
        \E\Big[ \sum_{t=1}^\tau F(w_t) - F(w^*) \Big]
        = \frac{2\beta^2 D^2}{\lambda}
            + \frac{4 G^2}{\lambda}(1 + \log \tau) 
            + \frac{8 (G + \beta D) G \tau}{ \lambda T}.    
        \label{eq:sgdwor_convrate_bound}
    \end{equation}
    To establish online performance, let $t\leq \tau$ and observe that by \cref{lem:bto_via_stability} and \cref{lem:sgdwor_stab};
    \begin{align*}
        \E [f(w_t; z_t) - F(w_t) ] 
            \leq \frac{G(t-1)}{T} \frac{8 G}{\lambda (t-1)} 
            = \frac{8 G^2}{\lambda T}.
    \end{align*}
    Therefore,
    \begin{align}
        \E\Big[ \sum_{t=1}^\tau f(w_t; z_t) - f(w^*; z_t) \Big]
            &= \sum_{t=1}^\tau \E [f(w_t; z_t) - F(w_t) ]
                + \E\Big[ \sum_{t=1}^\tau F(w_t) - F(w^*) \Big] 
            \nonumber
            \\
            &\leq \frac{8 G^2 \tau }{\lambda T}
            + \E\Big[ \sum_{t=1}^\tau F(w_t) - F(w^*) \Big] 
            \nonumber
            \\
            &\leq \frac{16 G (G + \beta D) \tau}{ \lambda T}
            + \frac{2\beta^2 D^2}{\lambda}
            + \frac{4 G^2}{\lambda}(1 + \log \tau) 
            ,
            \label{eq:sgdwor_first_rounds}
    \end{align}
    where the second inequality follows from \cref{eq:sgdwor_convrate_bound}.
    It remains to prove a bound on $\E[f(\bar w; z_t) - f(w^*; z_t)]$ for all $t > \tau$. To this end, first observe that by convexity of $F$ and \cref{eq:sgdwor_convrate_bound};
    \begin{align}
        \E F (\bar w) - F (w^*) 
        &\leq \frac{1}{\tau} \E \Big[ 
            \sum_{t=1}^\tau F (w_t) - F (w^*)
        \Big]
        \nonumber \\
        &\leq
            \frac{8 (G + \beta D) G }{ \lambda T}
            + \frac{2\beta^2 D^2}{\lambda \tau}
            + \frac{4 G^2}{\lambda \tau}(1 + \log \tau)
        \label{eq:sgdwor_wbar_optbound}.
    \end{align}
    To obtain online performance of $\bar w$, note that conditioned on $z_{1:\tau}$, for all $t > \tau$ we have that $z_t$ is uniform over the complement $Z \setminus z_{1:\tau}$. Therefore, by \cref{lem:bto_via_stability};
    \begin{align*}
        \E \big[f(\bar w; z_t) - F (\bar w)  \big]
        \leq \frac{\tau G}{T} \bar \epsilon(\tau),
    \end{align*}
    where $\bar \epsilon$ is the avg-oos stability rate (see \cref{def:oos_stability}) of $\bar w$.
    Now, let $\bar w (S)$ denote the average iterate computed on a datapoint sequence $S \in Z^*$, and observe for any $i\in[\tau]$;
    \begin{align*}
        \E \norm{\bar w(z_{1:\tau-1}) - \bar w (z_{1:\tau-1}^{(i)}) }
        &= \E \normB{\frac{1}{\tau}\sum_{j=1}^\tau \mathrm{GD}(z_{1:j-1}) 
                - \frac{1}{\tau}\sum_{j=1}^\tau \mathrm{GD}(z_{1:j-1}^{(i)}) }
        \\
        &\leq \frac{1}{\tau}\sum_{j=1}^\tau 
            \E \norm{\mathrm{GD}(z_{1:j-1}) - \mathrm{GD}(z_{1:j-1}^{(i)}) }
        \\
        &\leq \frac{1}{\tau}\sum_{j=2}^\tau 
            \frac{8 G}{\lambda (j - 1)}
        \\
        &\leq \frac{8 G}{\lambda \tau}(1 + \log \tau),
    \end{align*}
    where the second to last inequality follows from \cref{lem:sgdwor_stab}.
    This implies that $\bar \epsilon(\tau) \leq \frac{8 G}{\lambda \tau}(1 + \log \tau)$, thus
    \begin{align*}
        \E \big[f(\bar w; z_t) - F (\bar w)  \big]
        \leq \frac{8 G^2}{\lambda T}(1 + \log \tau).
    \end{align*}
    Combining the above with \cref{eq:sgdwor_wbar_optbound} we obtain
    \begin{align*}
        \E[ f(\bar w; z_t) - f(w^*; z_t) ] 
        &= \E \big[f(\bar w; z_t) - F (\bar w)  \big]
            + \E F (\bar w) - F (w^*)
        \\
        &\leq \frac{8 G^2}{\lambda T}(1 + \log \tau)
        + \frac{8 (G + \beta D) G }{ \lambda T}
            + \frac{2\beta^2 D^2}{\lambda \tau}
            + \frac{4 G^2}{\lambda \tau}(1 + \log \tau)
        \\
        &\leq \frac{16\beta G^2}{\lambda^2 T}(1 + \log T)
        + \frac{8 (G + \beta D) G }{ \lambda T}
            + \frac{4 \beta^3 D^2}{\lambda^2 T}.
    \end{align*}
    Summing the above over all rounds $t > \tau$ 
    and combining with \cref{eq:sgdwor_first_rounds}, we get
    \begin{align*}
        \E\Big[ \sum_{t=1}^T f(w_t; z_t) - f(w^*; z_t) \Big]
            &\leq 
            \frac{16 \beta G^2}{\lambda^2}(1 + \log T)
            + \frac{8 G (G + \beta D)}{ \lambda }
            + \frac{4 \beta^3 D^2}{\lambda^2}
            \\
            & \quad + \frac{4 G^2}{\lambda}(1 + \log \tau) 
            + \frac{8 G (G + \beta D) }{ \kappa \lambda}
            + \frac{2\beta^2 D^2}{\lambda}
            \\
            &\leq \frac{20 \beta G^2}{\lambda^2}(1 + \log T)
            + \frac{16 G (G + \beta D)}{ \lambda }
            + \frac{6 \beta^3 D^2}{\lambda^2},
    \end{align*}
    which concludes the proof up to a trivial computation.
\end{proof}

\subsection{The convex case}
Given \cref{alg:ressgd,alg:sgdwor}, we can derive as an immediate corollary regret bounds for the case where the loss function $f$ is only assumed to be convex in expectation, but not strongly convex.
This is achieved by adding L2 regularization; pick $w_0\in \Dom$ arbitrarily, and consider  $f^\alpha(w; z) \eqq f(w; z) + \tfrac{\alpha}{2}\norm{w - w_0}^2$.
By transforming the gradient estimators $\gEst_t \leftarrow \gEst_t + \alpha(w_t - w_0)$ in both algorithms, we optimize for the regularized random order loss sequence $f^\alpha(\cdot; z_1), \ldots f^\alpha(\cdot; z_T)$, and obtain regret bounds for suitable choices of $\alpha$.
\begin{corollary}\label{cor:regret_convex}
    Assume the loss function $f:\Dom \times Z \to \R$ is $G$-Lipschitz and $\beta$-smooth for all $z \in Z$ individually (\cref{assumption:lipschitz} and \cref{assumption:smooth}), and that
    $\frac{1}{T}\sum_{z\in Z}f(\cdot; z)$ is convex over $\Dom$.
    Then, we have the following guarantees for running algorithms \ref{alg:ressgd} and \ref{alg:sgdwor} on the regularized loss sequence:
    \begin{enumerate}[label={\upshape(\roman*)},nosep]
        \item for \cref{alg:ressgd} and $\alpha=1/\sqrt T$;
        \hspace{0.14cm}
        $
            \E\Big[ \sum_{t=1}^T f(w_t; z_t) - f(w^*; z_t) \Big]
            = \widetilde O (\sqrt T)
        $,
        \item for \cref{alg:sgdwor} and $\alpha=1/T^{1/3}$;
        \hspace{0.01cm}
        $
            \E\Big[ \sum_{t=1}^T f(w_t; z_t) - f(w^*; z_t) \Big]
            = \widetilde O (T^{2/3})
        $
        ,
    \end{enumerate}
    where big-$\widetilde O$ hides polynomial dependence on the problem parameters $\beta, G, D$, and logarithmic dependence on $T$.
\end{corollary}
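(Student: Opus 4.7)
The plan is a textbook reduction from convex to strongly-convex via L2 regularization, followed by tuning $\alpha$. For any $\alpha > 0$, pick an arbitrary anchor $w_0 \in \Dom$ and set $f^\alpha(w; z) \eqq f(w; z) + \tfrac{\alpha}{2}\norm{w - w_0}^2$. Modifying the gradient estimator as $\gEst_t \leftarrow \gEst_t + \alpha (w_t - w_0)$ is exactly running the original algorithm on the regularized loss sequence. The regularized losses are individually $(G + \alpha D)$-Lipschitz and $(\beta + \alpha)$-smooth, and $F^\alpha \eqq F + \tfrac{\alpha}{2}\norm{\cdot - w_0}^2$ is $\alpha$-strongly convex. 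Hence Theorems~\ref{thm:ressgd_regret} and~\ref{thm:sgdwor} apply with parameters $(G+\alpha D,\, \beta + \alpha,\, \alpha)$ in place of $(G, \beta, \lambda)$.

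Second, I would translate the regularized regret back to the unregularized one. Writing things out,
\[
    \sum_{t=1}^T \bigl[f(w_t; z_t) - f(w^*; z_t)\bigr]
    = \sum_{t=1}^T \bigl[f^\alpha(w_t; z_t) - f^\alpha(w^*; z_t)\bigr]
    + \frac{\alpha}{2}\sum_{t=1}^T \bigl(\norm{w^* - w_0}^2 - \norm{w_t - w_0}^2\bigr).
\]
Since $w^*$ is feasible (merely suboptimal) for $F^\alpha$, the first sum is at most the regret of the algorithm against $w^\alpha_* \eqq \argmin_{w\in\Dom} F^\alpha(w)$, to which the strongly convex theorems apply. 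The residual sum is bounded in absolute value by $\tfrac{\alpha T}{2}\norm{w^* - w_0}^2 \leq \tfrac{\alpha T D^2}{2}$.

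Third, I would balance $\alpha$. For~(i), \cref{thm:ressgd_regret} applied to the regularized sequence gives regularized regret $O\!\left((G+\alpha D)^2 \alpha^{-1} \log T\right)$ plus additive polynomial terms in $\beta + \alpha, D$; adding $\alpha T D^2/2$ yields a total of order $\widetilde O(G^2/\alpha + \alpha T D^2)$, minimized at $\alpha \asymp 1/\sqrt T$ for the claimed $\widetilde O(\sqrt T)$ bound. For~(ii), \cref{thm:sgdwor} gives regularized regret $\widetilde O\!\left((\beta + \alpha)(G + \alpha D)^2/\alpha^2\right)$, so combined with the residual we obtain $\widetilde O(\beta G^2/\alpha^2 + \alpha T D^2)$, minimized at $\alpha \asymp T^{-1/3}$ for the claimed $\widetilde O(T^{2/3})$ bound.

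I do not anticipate any substantive obstacle: the argument is purely a reduction, and the only mildly subtle point is using $w^*$ (the unregularized optimum) as a valid comparator inside a bound proven against $w^\alpha_*$, which is immediate since $f^\alpha \geq f$ implies the comparator term can only increase when passing to $f^\alpha$. The remaining work is routine bookkeeping of constants and checking that the additive $\mathrm{poly}(\beta + \alpha, G + \alpha D, D)$ terms from the parent theorems remain absorbed into the $\widetilde O(\cdot)$ upon substituting the chosen $\alpha$.
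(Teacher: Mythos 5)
Your proposal is correct and follows essentially the same route as the paper: regularize with $\tfrac{\alpha}{2}\norm{w-w_0}^2$, apply \cref{thm:ressgd_regret,thm:sgdwor} to the regularized sequence, bound the residual by $\alpha D^2 T$, and tune $\alpha = T^{-1/2}$ resp.\ $T^{-1/3}$. If anything, you are slightly more explicit than the paper about the comparator shift to $w^\alpha_* = \argmin_{w\in\Dom} F^\alpha(w)$ (justified by $F^\alpha(w^\alpha_*)\le F^\alpha(w^*)$; your side remark about $f^\alpha \ge f$ is unnecessary) and about the modified constants $G+\alpha D$, $\beta+\alpha$, which the paper absorbs into the $\widetilde O$ notation.
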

\begin{proof} 
    To ease notational clutter denote $f_t(w) \eqq f(w; z_t)$ and $f^\alpha_t(w) \eqq f^\alpha (w; z_t) = f_t(w) + \frac{\alpha}{2}\norm{w - w_0}^2$.
    Let $ F^\alpha(w) \eqq \frac{1}{T}\sum_{z\in Z} f^\alpha(w; z)$, then
    $
        F^\alpha (w) = F(w) + \frac{\alpha}{2}\norm{w - w_0}^2,
    $
    which implies that $F^\alpha$ is $\alpha$-strongly convex. 
    Therefore,
    \[
        \E\Big[ \sum_{t=1}^T f^\alpha_t(w_t) - f^\alpha_t(w^*) \Big]
        \leq \Re_T(\alpha),
    \]
    where $\Re_T(\alpha)$ denotes the regret w.r.t.~the regularized loss sequence $f^\alpha_t$. In addition;
    \begin{align*}
        \sum_{t=1}^T f_t(w_t) - f_t(w^*)
        &= \sum_{t=1}^T f_t(w_t) - f_t^\alpha(w_t)
        + \sum_{t=1}^T f_t^\alpha(w_t) - f_t^\alpha(w^*)
        + \sum_{t=1}^T f_t^\alpha(w^*) - f_t(w^*)
        \\
        &= \frac{\alpha}{2}\sum_{t=1}^T \norm{w_t - w_0}^2
        + \sum_{t=1}^T f_t^\alpha(w_t) - f_t^\alpha(w^*)
        + \frac{\alpha}{2}\sum_{t=1}^T \norm{w^* - w_0}^2
        \\
        &\leq \sum_{t=1}^T f_t^\alpha(w_t) - f_t^\alpha(w^*)
        + \alpha D^2 T 
        \\
        &\leq \Re_T(\alpha) + \alpha D^2 T.
    \end{align*}
    The result follows after substituting for the values of $\alpha$ specified in the statement of the theorem, and the regret bounds of Theorems \ref{thm:ressgd_regret} and \ref{thm:sgdwor}.
\end{proof}

We conclude by noting the above result underscores the importance of obtaining bounds with good dependence on the strong convexity parameter.
In particular, only the optimal $1/\lambda$ dependence allows for regularization that guarantees optimal (up to logarithmic factors) performance w.r.t.~in the non-strongly convex setting.

\subsection*{Acknowledgements}

This work was supported by the European Research Council (ERC) under the European Union’s Horizon 2020 research and innovation program (grant agreement No.~882396), by the Israel Science Foundation (grants number 993/17 and 2549/19), by the Len Blavatnik and the Blavatnik Family foundation, and by the Yandex Initiative in Machine Learning at Tel Aviv University.

\bibliographystyle{abbrvnat}
\bibliography{main}

\appendix 


\section{Stability proofs}
\label{sec:stability_proofs}



\label{sec:proof:stability_main}

In this section, we prove \cref{lem:stability_main}. The proof naturally breaks down to establishing a recursive relation, which we do first, followed by technical arguments to unroll it. We begin by recording notation used throughout this section.
Given two random sequences $S = (z_1, \ldots, z_m), S' = (z_1', \ldots, z_m')$, we denote their associated filtration by  $\mathcal F_t = (z_1, z_1', \ldots, z_t, z_t')$, for $t \leq m$.
In addition, we denote the conditional expectation given all rounds up to $t$ by 
\begin{equation}
    \Ee_t[\cdot] \eqq \E_{S, S'}[\, \cdot \mid \mathcal F_{t-1}],
\end{equation} 
and the conditional probability by 
\begin{equation}
    \Pr\nolimits_t(\cdot) \eqq \Pr_{S, S'} (\cdot \mid \mathcal F_{t-1}).
\end{equation}
We set $\omega_t \eqq \norm{w_t - w_t'}$, where $w_t$ and $w_t'$ are the result of $t-1$ gradient steps on $S$ and $S'$ respectively.
Explicitly, this means that
$w_t = \mathrm{GD}(S; w_1, t-1, \{\eta_t\})$, and 
$w_t' = \mathrm{GD}(S'; w_1, t-1, \{\eta_t\})$. 
Finally, we note that $w_t$ and $w_t'$ are $\mathcal F_{t-1}$-measurable random variables, and therefore also $\omega_t$. 
Our first lemma stated and proved below provides the recursive relation satisfied by $\E \omega_t$.
    
\begin{lemma}[recursion] \label{lem:recursion}
    Let $i\leq m \leq T$, 
    and assume that $S, S'$ are length $m$ sequences distributed such that the following two assumptions of \cref{lem:stability_main} are satisfied;
    \begin{enumerate}[label=(\roman*)]
        \item $\Pr\nolimits_t(z_t \neq z_t') \leq \delta$ for all $t \neq i$;
        \item $\Ee_t\big[ f(w; z_t) \big]
        $
        is $\mu$-strongly convex as a function of 
        $\mathcal F_{t-1}$-measurable $w \in \Dom$.
    \end{enumerate}
    Then under the condition that for all $t$, $\eta_t \leq \ifrac{\tilde \mu}{\beta^2}$ 
    with $\tilde \mu \eqq \mu - \delta\beta$, we have;
    \begin{align}
        \E \omega_{t+1} &\leq (1 - \eta_t \tilde \mu/2)\E \omega_t + 2 \delta\eta_t G
        \quad \text{ for all } t \neq i,
        \label{eq:recursion}\tag{R}
        \\
        \E \omega_{i+1} &\leq \E \omega_i + 2\eta_i G.
        \label{eq:recursion_stop}\tag{R stop}
    \end{align}
\end{lemma}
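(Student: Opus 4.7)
The plan is to establish the main recursion for $t\neq i$ by a triangle-inequality split that isolates the effect of the rare event $\{z_t \neq z_t'\}$ from the contraction obtained under a common loss, and to handle the stopping round $t=i$ via a crude Lipschitz bound.

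For $t\neq i$, I would introduce the hybrid step in which $w_t'$ is updated under the \emph{same} loss $f(\cdot\,;z_t)$ as $w_t$, and split
\begin{align*}
    \omega_{t+1}
    \leq&\; \big\|\Gstep(w_t; f(\cdot\,;z_t),\eta_t) - \Gstep(w_t'; f(\cdot\,;z_t),\eta_t)\big\|
    \\
    &+ \big\|\Gstep(w_t'; f(\cdot\,;z_t),\eta_t) - \Gstep(w_t'; f(\cdot\,;z_t'),\eta_t)\big\|.
\end{align*}
Non-expansiveness of $\Proj$ and the $G$-Lipschitz assumption bound the second term by $2\eta_t G\,\Ind{z_t\neq z_t'}$. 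Now condition on $\mathcal F_{t-1}$: both $w_t$ and $w_t'$ are $\mathcal F_{t-1}$-measurable, while $z_t$ has a conditional distribution whose members are $G$-Lipschitz and $\beta$-smooth, and whose mean loss is $\mu$-strongly convex by assumption~(ii). \cref{lem:sgdcontract} then applies with this conditional distribution (the step-size prerequisite $\eta_t\leq \mu/\beta^2$ follows from $\eta_t\leq \tilde\mu/\beta^2$), yielding contraction of the first term at rate $(1-\eta_t\mu/2)$ in $\mathcal F_{t-1}$-expectation. Using assumption~(i) on the second term gives $\Ee_t[\omega_{t+1}] \leq (1-\eta_t\mu/2)\,\omega_t + 2\eta_t G\delta$, and taking total expectation together with $\tilde\mu\leq\mu$ delivers \cref{eq:recursion}.

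The stopping round $t=i$ is handled directly, since assumption~(i) need not hold at this index and so the swap term cannot be controlled in expectation. Discarding the contraction entirely, non-expansiveness of $\Proj$ combined with the triangle inequality and $G$-Lipschitzness yields
\begin{align*}
    \omega_{i+1}\leq \omega_i + \eta_i\bigl(\|\nabla f(w_i;z_i)\| + \|\nabla f(w_i';z_i')\|\bigr) \leq \omega_i + 2\eta_i G,
\end{align*}
which gives \cref{eq:recursion_stop} after taking expectation. The main subtlety to get right is the conditional application of \cref{lem:sgdcontract}: that lemma is phrased as a statement about expectation over a distribution of losses, and we must instantiate it with the \emph{conditional} law of $f(\cdot\,;z_t)$ given $\mathcal F_{t-1}$, leaning on the $\mathcal F_{t-1}$-measurability of $w_t,w_t'$ and on assumption~(ii) as the conditional strong-convexity hypothesis. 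The appearance of $\tilde\mu$ rather than $\mu$ in the stated recursion is a harmless weakening at this step, and is made to align with the step-size choice $\eta_t\leq \tilde\mu/\beta^2$ that will be convenient for unrolling the recursion.
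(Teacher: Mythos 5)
Your proposal is correct, and for the main recursion \eqref{eq:recursion} it takes a genuinely different route from the paper. The paper conditions on the agreement event $\{z_t = z_t'\}$: it writes $\Ee_t\omega_{t+1}$ as a mixture over $\{z_t=z_t'\}$ and $\{z_t\neq z_t'\}$, and must then verify that the conditional-on-agreement mean $\Psi_t(x)=\Ee_t[f(x;z_t)\mid z_t=z_t']$ is still strongly convex — which requires a law-of-total-expectation computation and the auxiliary fact that subtracting a $\delta'\beta$-smooth function from a $\mu$-strongly convex one leaves $(\mu-\delta'\beta)$-strong convexity (\cref{lem:strong_minus_smooth}), giving constant $(\mu-\delta'\beta)/(1-\delta')$ before the algebra recombines everything into the $(1-\eta_t\tilde\mu/2)$ factor. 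Your hybrid triangle-inequality split sidesteps this entirely: the first term involves a common loss $f(\cdot;z_t)$, so \cref{lem:sgdcontract} can be applied with the plain conditional law of $z_t$ given $\mathcal F_{t-1}$, whose mean is $\mu$-strongly convex directly by assumption (ii), while the second term is charged $2\eta_t G\,\Ind{z_t\neq z_t'}$ and controlled by assumption (i). You correctly identify the one genuine subtlety (instantiating \cref{lem:sgdcontract} conditionally, using $\mathcal F_{t-1}$-measurability of $w_t,w_t'$), and the step-size check $\eta_t\leq\tilde\mu/\beta^2\leq\mu/\beta^2$ is right. Your route is more elementary, needs no auxiliary strong-convexity-minus-smoothness lemma at this stage, and in fact yields the slightly tighter intermediate contraction $(1-\eta_t\mu/2)$, which you then harmlessly relax to $(1-\eta_t\tilde\mu/2)$ to match the statement; the paper's event-conditioning buys nothing extra here beyond arriving at the same bound. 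Your treatment of the stopping round $t=i$ coincides with the paper's (pointwise non-expansiveness plus Lipschitzness, then expectation).
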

\begin{proof}
    We have for all $t \leq m$;
    \begin{align}
        \Ee_t \omega_{t+1} 
        = \Pr\nolimits_t(z_t = z_t')\Ee_t [\omega_{t+1} \mid z_t = z_t'] 
        + \Pr\nolimits_t(z_t \neq z_t')\Ee_t [\omega_{t+1} \mid z_t \neq z_t'].
        \label{eq:stability_main_1}
    \end{align}
    To proceed we note that $\Psi_t(x) \eqq \Ee_t[f(x; z_t) \mid z_t = z_t']$ is strongly convex for all $t \neq i$. Indeed, by the law of total expectation we have that for $t \neq i$,
    \begin{align*}
        \Psi_t(x) = \frac{1}{1-\delta'} \left(
            \Ee_t [f(x; z_t)] - \delta' \Ee_t[f(x; z_t) \mid z_t \neq z_t']
        \right)
        ,
    \end{align*}
    where $\delta' \eqq \Pr_t(z_t \neq z_t')$.
    By assumption, $\Ee_t f(\cdot; z_t)$ is $\mu$-strongly convex and $f(\cdot; z)$ is $\beta$-smooth for all $z \in Z$, therefore by a standard argument (see \cref{lem:strong_minus_smooth}) we have that $\Psi_t$ is $(\mu - \delta'\beta)/(1-\delta')$-strongly convex. (Recall we assume that $\delta' \leq \delta \leq \mu/2\beta$, implying this strong convexity constant is indeed positive.) By \cref{lem:sgdcontract}, having that $\eta_t \leq \frac{\tilde \mu}{\beta^2}$ implies
    \begin{align*}
        \Ee_t [\omega_{t+1} \mid z_t = z_t'] 
        &= \Ee_t \big[ 
            \norm{\Gstep(w_t; f(\cdot; z_t), \eta_t) - \Gstep(w_t'; f(\cdot; z_t), \eta_t)}
        \mid z_t = z_t'\big]
        \\
        &\leq \left(1 - \eta_t \frac{\mu - \delta'\beta}{2(1 - \delta')} \right)\omega_t,
    \end{align*}
    In addition, it holds with probability one (i.e. for any $z_t, z_t'$) that
    \begin{align}
        \omega_{t+1} 
        = \norm{\Gstep(w_t; f(\cdot; z_t), \eta_t) - \Gstep(w_t'; f(\cdot; z_t'), \eta_t)}
        \leq \omega_t + 2\eta_t G.
        \label{eq:recursion_expansion_bound}
    \end{align}
    Now, set $\alpha \eqq (\mu - \delta'\beta)/(1 - \delta') = \tilde \mu / (1-\delta')$,
    and returning to \cref{eq:stability_main_1} we obtain for all $t\neq i$;
    \begin{align*}
        \Ee_t \omega_{t+1} 
        &\leq (1 - \delta')(1 - \eta_t \alpha/2)\omega_t
            + \delta' (\omega_t + 2\eta_t G)
        \\
        &= (1 - \eta_t \alpha/2 - \delta' + \delta'\eta_t \alpha/2 + \delta')\omega_t
            + 2 \delta'\eta_t G
        \\
        &= (1 - (1 - \delta')\eta_t \alpha/2)\omega_t
            + 2 \delta'\eta_t G
        \\
        &= (1 - \eta_t \tilde \mu/2)\omega_t
            + 2 \delta'\eta_t G
        \\
        &\leq (1 - \eta_t \tilde \mu/2)\omega_t
            + 2 \delta \eta_t G
        .
    \end{align*}
    Taking expectations on both sides of the above inequality w.r.t.~all rounds up to $t$ concludes the proof of \cref{eq:recursion}. 
    Finally, for \cref{eq:recursion_stop} we take expectations on both sides of \cref{eq:recursion_expansion_bound}, which yields $\E \omega_{i+1} \leq \E \omega_i + 2\eta_i G$ and completes the proof.
\end{proof}

We are now ready for the proof of our main lemma, given below.
\begin{proof}[of \cref{lem:stability_main}]
    Set $\mu' \eqq \tilde \mu / 2$ and $t_0 \eqq \min\{t\in [m] \mid \frac{2}{\tilde \mu t} \leq \frac{\tilde \mu}{\beta^2}\}$.
    We start by recording the following fact; for any $s\leq n \leq m$, we have
    \begin{align}
        \sum_{k=s}^n 2 \eta_k G
        \prod_{t=k+1}^n(1 - \eta_t \mu') 
        &= \sum_{k=s}^{t_0 - 1} 2 \eta_k G
            \prod_{t=k+1}^n(1 - \eta_t \mu') 
            + \sum_{k=t_0}^{n} 2 \eta_k G
            \prod_{t=k+1}^n(1 - \eta_t \mu')
        \nonumber \\
        &= 2\eta_1 G \sum_{k=s}^{t_0 - 1} (1 - \eta_1 \mu')^{n-k} 
            + 2 G \sum_{k=t_0}^{n} \frac{2}{\tilde \mu k}
            \prod_{t=k+1}^n \Big(1 - \frac{1}{t} \Big)
        \nonumber \\
        &\leq 2 \eta_1 G \frac{1}{\eta_1 \mu'} 
            + \frac{4 G}{\tilde \mu} \sum_{k=t_0}^n \frac{1}{n}
        \nonumber \\
        &\leq \frac{8 G}{\tilde \mu} 
            .
        \label{eq:stabmain_recursion_sum}
    \end{align}
    Proceeding, by assumption we have $\eta_t \leq \tilde \mu / \beta^2$ for all $t$, hence we may apply \cref{lem:recursion}.
    Unrolling the recursive relation \cref{eq:recursion} from $i$ backwards we obtain;
    \begin{align}
        \E \omega_{i}
        &\leq 
            (1 - \eta_{i-1} \mu') \E \omega_{i-1} 
            + 2\delta \eta_{i-1} G 
        \nonumber \\
        &\leq (1 - \eta_{i-1} \mu') 
                (1 - \eta_{i-2} \mu') 
                    \E \omega_{i-2}
                + (1 - \eta_{i-1} \mu') 2\delta \eta_{i-2} G 
            + 2\delta \eta_{i-1} G 
        \nonumber \\
        &\leq \E\omega_1 
                \prod_{t=1}^{i-1}(1 - \eta_t \mu')
            + \delta \sum_{k=1}^{i-1} 2 \eta_k G
                \prod_{t=k+1}^{i-1}(1 - \eta_t \mu')
        \nonumber \\
        &\leq \delta \frac{8 G}{\tilde \mu}
        \label{eq:stabmain_i},
    \end{align} 
    where the last inequality follows from \cref{eq:stabmain_recursion_sum} and the fact that $\omega_1 = 0$. Similarly, rolling \cref{eq:recursion} from $m$ to $i$ and stopping with \cref{eq:recursion_stop} we obtain
    \begin{align}
        \E \omega_{m+1}
        &\leq (\E\omega_i + 2 \eta_i G) 
                \prod_{t=i+1}^m(1 - \eta_t \mu')
            + \delta \sum_{k=i+1}^m 2 \eta_k G
                \prod_{t=k+1}^m(1 - \eta_t \mu')
        \nonumber \\
        &\leq \E\omega_i 
            + 2 \eta_i G \prod_{t=i+1}^m(1 - \eta_t \mu')
            + \delta \frac{8 G}{\tilde \mu}
        \nonumber \\
        &\leq \delta \frac{16 G}{\tilde \mu} 
            + 2 \eta_i G \prod_{t=i+1}^m(1 - \eta_t \mu')
        \nonumber ,
    \end{align}
    where the second inequality follows from \cref{eq:stabmain_recursion_sum} and the third from \cref{eq:stabmain_i}.
    Finally, observe that by the definition of $t_0$, for all $t<t_0$ we have $\eta_t \leq \frac{2}{\tilde \mu (t_0 -1 )}$, which implies;
    \begin{align*}
        2 \eta_i G \prod_{t=i+1}^m(1 - \eta_t \mu')
        \leq \begin{cases}
            \frac{4 G}{\tilde \mu i} 
                \prod_{t=i+1}^m \Big(1 - \frac{1}{t} \Big) 
                = \frac{4 G}{\tilde \mu m} \quad & i \geq t_0,
            \\
            \frac{4 G}{\tilde \mu (t_0 - 1)} 
                \prod_{t=t_0}^m \Big(1 - \frac{1}{t} \Big) 
                = \frac{4 G}{\tilde \mu m} \quad & i < t_0.
        \end{cases}
    \end{align*}
    Combining this last inequality with the one before, we get
    \[
        \E \omega_{m+1} 
        \leq \frac{4 G}{\tilde \mu m} (1 + 4\delta m),
    \]
    which concludes the proof.
\end{proof}




\section{Auxiliary lemmas}
\label{sec:proof:aux_lemmas}
\begin{proof}[of \cref{lem:sgd_convrate}]
    We have for all $t$;
    \begin{align*}
    	\norm{w_{t+1} - w^*}^2 
    	&\leq \norm{w_t - w^*} -2\eta_t \gEst_t\T (w_t - w^*) 
    		+ \eta_t^2 G^2
    	\\
    	&= \norm{w_t - w^*} 
    	    - 2\eta_t \nabla F(w_t)\T (w_t - w^*) + 2 \eta_t(\nabla F(w_t) - \gEst_t)\T (w_t - w^*)
    		+ \eta_t^2 G^2.
    \end{align*}
    Rearranging and using $\lambda$-strong convexity of $F$, we obtain
    \begin{align}
        F(w_t) - F(w^*) \leq 
        \frac{D_t^2 - D_{t+1}^2}{2\eta_t} - \frac{\lambda D_t^2}{2}
        + \frac{\eta G^2}{2} + \mathcal E_t,
        \label{eq:sgd_convrate_main}
    \end{align}
    where we define $D_t \eqq \norm{w_t - w^*}$, and $\mathcal E_t \eqq (\nabla F(w_t) - \gEst_t)\T (w_t - w^*)$.
    Now, by our step-size schedule $\eta_t = \min\{\tilde \mu/\beta^2, 2/(\tilde \mu t)\}$, we have that when $t < t_0 \eqq 2\beta^2/\tilde \mu^2$, 
    we get
    $\eta_t = \eta_{t-1} = \tilde \mu/\beta^2$ thus
    $\frac{1}{\eta_t} - \frac{1}{\eta_{t-1}} = 0$. In addition, for $t \geq t_0$;
    \begin{align*}
        \frac{1}{\eta_t} - \frac{1}{\eta_{t-1}} 
        \leq \frac{\tilde \mu t}{2} - \frac{\tilde \mu(t-1)}{2}
        = \tilde \mu/2 \leq \lambda.
    \end{align*}
    Summing \cref{eq:sgd_convrate_main} over all rounds, we now obtain
    \begin{align*}
        \sum_{t=1}^\tau F(w_t) - F(w^*) 
        &\leq \frac{1}{2}\sum_{t=1}^\tau \left( 
            \frac{1}{\eta_t} - \frac{1}{\eta_{t-1}} - \lambda
        \right)D_t^2 + \frac{G^2}{2}\sum_{t=1}^t \eta_t + \sum_{t=1}^\tau \mathcal E_t
        \\
        &\leq \frac{D^2}{2 \eta_1} 
            + \frac{G^2}{\tilde \mu}(1 + \log \tau)
            + \sum_{t=1}^\tau \mathcal E_t
        \\
        &= \frac{\beta^2 D^2}{2\tilde \mu}
        + \frac{ G^2}{  \tilde \mu}(1 + \log \tau)
        + \sum_{t=1}^\tau \mathcal E_t
        ,
    \end{align*}
    where for the second inequality we use the fact that $\eta_t \leq 2/\tilde \mu t$ for all $t$.
    This completes the proof.
\end{proof}

\begin{lemma}\label{lem:strong_minus_smooth}
    If $\Psi$ is $\mu$-strongly-convex and $\phi$ is $\gamma$-smooth with $\gamma < \mu$, then $\Psi - \phi$ is $(\mu - \gamma)$-strongly-convex.
\end{lemma}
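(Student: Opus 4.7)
The plan is to reduce the claim to two standard first-order inequalities, one for strong convexity and one for smoothness, and then subtract them. Recall that $\mu$-strong convexity of $\Psi$ is equivalent to the quadratic lower bound
\[
    \Psi(y) \geq \Psi(x) + \nabla \Psi(x)\T(y - x) + \tfrac{\mu}{2}\norm{y - x}^2 \qquad \forall x,y \in \Dom,
\]
which is exactly the inequality used in the strong convexity assumption earlier in the paper. On the other hand, $\gamma$-smoothness of $\phi$ (Lipschitz continuity of the gradient with constant $\gamma$, as in \cref{assumption:smooth}) gives the ``descent lemma'' quadratic \emph{upper} bound
\[
    \phi(y) \leq \phi(x) + \nabla \phi(x)\T(y - x) + \tfrac{\gamma}{2}\norm{y - x}^2 \qquad \forall x,y \in \Dom.
\]
Crucially, this upper bound does \emph{not} require $\phi$ to be convex; it follows from integrating $\nabla\phi$ along the segment from $x$ to $y$ and applying Cauchy--Schwarz together with the Lipschitz bound on $\nabla \phi$.

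With these two bounds in hand, I would simply subtract the second from the first: for any $x,y \in \Dom$,
\[
    (\Psi - \phi)(y) \geq (\Psi - \phi)(x) + \bigl(\nabla \Psi(x) - \nabla \phi(x)\bigr)\T(y - x) + \tfrac{\mu - \gamma}{2}\norm{y - x}^2,
\]
and identify $\nabla \Psi(x) - \nabla \phi(x) = \nabla(\Psi - \phi)(x)$. This is precisely the defining inequality of $(\mu - \gamma)$-strong convexity for $\Psi - \phi$, which is a meaningful statement exactly under the hypothesis $\gamma < \mu$.

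There is no real obstacle here; the only step that deserves a moment's care is recalling that the quadratic upper bound on $\phi$ follows from smoothness alone, without a convexity assumption on $\phi$, since the paper's notion of smoothness is just gradient Lipschitzness. An equivalent route would be to use the monotonicity characterization (as in the cited \texttt{strongly\_convex\_monotone} lemma): combine $\langle \nabla \Psi(x) - \nabla \Psi(y), x - y\rangle \geq \mu\norm{x - y}^2$ with $\langle \nabla \phi(x) - \nabla \phi(y), x - y\rangle \leq \gamma\norm{x - y}^2$ (the latter by Cauchy--Schwarz plus smoothness) to obtain $(\mu - \gamma)$-monotonicity of $\nabla(\Psi - \phi)$, which again is equivalent to $(\mu - \gamma)$-strong convexity.
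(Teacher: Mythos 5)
Your proof is correct and follows essentially the same route as the paper: subtract the smoothness upper bound for $\phi$ (which indeed requires no convexity of $\phi$) from the strong convexity lower bound for $\Psi$ and read off the defining inequality for $\Psi - \phi$. The alternative monotonicity route you sketch is also fine, but the main argument matches the paper's proof.
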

\begin{proof}
    Let $y, x \in \Dom$.
    By smoothness of $\phi$, we have that
    \begin{align*}
        \phi(y) &\leq \phi(x) 
            + \nabla \phi(x)\T(y - x)
            + \frac{\gamma}{2}\norm{y-x}^2.
        \\
        \implies 
        -\phi(y) &\geq -\phi(x)
            - \nabla \phi(x)\T(y - x)
            - \frac{\gamma}{2}\norm{y - x}^2
        .
    \end{align*}
    In addition, by strong convexity of $\Psi$ we have that
    \[
        \Psi(y) \geq \Psi(x) 
            + \nabla \Psi(x)\T(y - x)
            + \frac{\mu}{2}\norm{y-x}^2
        .
    \]
    Summing the two inequalities we obtain
    \[
        (\Psi - \phi)(y) 
        \geq
            (\Psi - \phi)(x)
            +\nabla(\Psi - \phi)(x)\T(y - x)
            +\frac{\mu -\gamma}{2}\norm{y - x},
    \]
    and conclude the proof.
\end{proof}

\begin{lemma}\label{lem:strongly_convex_monotone}
    Let $\Psi: \Dom \to \R$ be $\mu$-strongly-convex and differentiable. Then
    \[
        (\nabla \Psi(x) - \nabla \Psi(y))\T(x, y) \geq \mu \norm{x - y}^2.
    \]
\end{lemma}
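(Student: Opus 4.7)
The plan is to apply the definition of $\mu$-strong convexity twice, once with the roles of $x$ and $y$ as stated and once with them swapped, and then add the two resulting inequalities. Strong convexity is a pointwise gradient inequality, so this is the most direct route and avoids any appeal to second-order information (which we do not have, since $\Psi$ is only assumed differentiable).

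Concretely, I would first write the strong convexity inequality at $y$ anchored at $x$:
\[
    \Psi(y) \geq \Psi(x) + \nabla\Psi(x)\T(y - x) + \tfrac{\mu}{2}\norm{y - x}^2,
\]
and then the symmetric inequality at $x$ anchored at $y$:
\[
    \Psi(x) \geq \Psi(y) + \nabla\Psi(y)\T(x - y) + \tfrac{\mu}{2}\norm{x - y}^2.
\]
Summing these two inequalities, the $\Psi(x)$ and $\Psi(y)$ terms on each side cancel, leaving
\[
    0 \geq \nabla\Psi(x)\T(y - x) + \nabla\Psi(y)\T(x - y) + \mu\norm{x - y}^2,
\]
which upon rearranging and using $\nabla\Psi(x)\T(y - x) = -\nabla\Psi(x)\T(x - y)$ yields exactly the desired conclusion.

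There is no real obstacle here; the only subtlety is making sure the two strong convexity inequalities are applied in the symmetric form so that the function values cancel cleanly. The result is essentially immediate and mirrors the structure of the proof of \cref{lem:strong_minus_smooth} given just above, which also relies on summing two pointwise inequalities of this kind.
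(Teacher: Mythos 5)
Your proposal is correct and follows essentially the same route as the paper: apply the strong convexity inequality twice with the roles of $x$ and $y$ swapped, sum the two inequalities so the function values cancel, and rearrange to get the monotonicity bound. Nothing is missing.
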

\begin{proof}
    By strong convexity
    \begin{align*}
        \nabla \Psi(y)\T(y - x) \geq \Psi(y) - \Psi(x) + \frac{\mu}{2}\norm{x - y}^2,
    \end{align*}
    and 
    \begin{align*}
        \nabla \Psi(x)\T(x - y) \geq \Psi(x) - \Psi(y) + \frac{\mu}{2}\norm{x - y}^2.
    \end{align*}
    Summing the above inequalities, the result follows.
\end{proof}

\end{document}